\documentclass{article}


\def\conference{arxiv}  
\def\neuripsconf{neurips}
\def\icmlconf{icml}
\def\arxivconf{arxiv}

\usepackage[table,xcdraw]{xcolor}

\ifx\conference\neuripsconf
    \usepackage[nonatbib]{styles/neurips_2025}
    \usepackage[numbers,sort&compress]{natbib}
\else\ifx\conference\arxivconf
    \usepackage{styles/arxiv}
\else
    \usepackage{styles/icml2025}
\fi

\usepackage{enumitem}




\usepackage[utf8]{inputenc} 
\usepackage[T1]{fontenc}    
\usepackage{amsmath}
\usepackage{hyperref}       
\usepackage{url}            
\usepackage{booktabs}       
\usepackage{amsfonts}       
\usepackage{nicefrac}       
\usepackage{microtype}      
\usepackage{xspace}
\usepackage{cleveref}
\usepackage[breakable]{tcolorbox}
\usepackage{listings}
\usepackage{bbold}
\usepackage{enumitem}
\usepackage{algorithm}
\usepackage{algpseudocode}
\usepackage{ifthen}
\usepackage{fontawesome5}

\usepackage{amsthm}
\usepackage{math_commands}
\usepackage{amssymb}
\usepackage{mathtools}

\usepackage{pifont}
\newcommand{\cmark}{\ding{51}}%
\newcommand{\xmark}{\ding{55}}%



\newcommand{\method}{\textsc{Self-Study}\xspace}

\newcommand{\artifact}{\textsc{Cartridge}\xspace}
\newcommand{\artifacts}{\textsc{Cartridges}\xspace}

\newcommand{\longhealth}{\textsc{LongHealth}\xspace}

\newcommand{\mtob}{\textsc{MTOB}\xspace}
\newcommand{\qasper}{\textsc{QASPER}\xspace}

\newcommand{\genconvo}{\textsc{GenConvo}\xspace}


\newcommand{\llamathree}{\textsc{Llama 3}\xspace}
\newcommand{\llamathreeb}{\textsc{Llama-3B}\xspace}
\newcommand{\llamaeightb}{\textsc{Llama-8B}\xspace}

\newcommand{\eg}{\textit{e.g.}\xspace}
\newcommand{\ie}{\textit{i.e.}\xspace}

\newcommand{\etal}{\textit{et al.}\xspace}

\newcommand{\ctx}{\mathcal{C}}
\newcommand{\subctx}{\tilde{\mathbf{c}}}
\newcommand{\seed}{\mathbf{s}}
\newcommand{\ctxrep}{Z}

\newcommand{\queries}{Q}
\newcommand{\query}{q}
\newcommand{\resps}{R}
\newcommand{\resp}{r}
\newcommand{\vocab}{\mathcal{V}}
\newcommand{\llm}{\mathcal{F}}

\newcommand{\numtrain}{m_\text{train}}

\newtheorem{theorem}{Theorem}
\newtheorem{lemma}{Lemma}

\newtheorem{definition}{Definition}

\newtcolorbox{examplebox}[1][]{
    colback=lightgray!10,
    colframe=black,
    boxrule=0.75pt,
    title=#1,
    fonttitle=\bfseries,
    left=3pt,
    right=3pt,
    top=2pt,
    bottom=2pt,
    breakable,  
}

\newtcolorbox{exampleboxcode}[1][]{
    colback=lightgray!10,
    colframe=blue,
    boxrule=0.75pt,
    title=#1,
    fonttitle=\ttfamily,
    left=3pt,
    right=3pt,
    top=2pt,
    bottom=2pt,
}

\definecolor{codegreen}{rgb}{0,0.6,0}
\definecolor{codegray}{rgb}{0.5,0.5,0.5}
\definecolor{codepurple}{rgb}{0.58,0,0.82}
\definecolor{backcolour}{rgb}{0.97,0.97,0.97}

\lstdefinestyle{codestyle}{
    commentstyle=\color{codegreen},
    keywordstyle=\color{magenta},
    numberstyle=\tiny\color{codegray},
    stringstyle=\color{codepurple},
    basicstyle=\ttfamily\scriptsize,
    breakatwhitespace=false,         
    breaklines=true,                 
    captionpos=b,                    
    keepspaces=true,                 
    numbers=left,                    
    numbersep=5pt,                  
    showspaces=false,                
    showstringspaces=false,
    showtabs=false,                  
    tabsize=2,
    xleftmargin=10pt,                
    xrightmargin=10pt                
}
\lstset{style=codestyle}


\newcommand{\repetitiveMQAR}{\text{repetitive MQAR}}

\newcommand{\contextSize}{N}
\newcommand{\numPairs}{m}
\newcommand{\numQueries}{n}
\newcommand{\timestep}{t}

\newcommand{\modelDim}{d}

\newcommand{\state}{\mW}

\newcommand{\cacheMatrix}{\mW}
\newcommand{\cacheMatrixTime}[1]{\cacheMatrix^{(#1)}}

\newcommand{\kvSet}{S}

\newcommand{\kvpair}[1]{(\key{#1}, \val{#1})}

\newcommand{\key}[1]{\vk^{(#1)}}
\newcommand{\keyT}[1]{\left(\vk^{(#1)}\right)^\top}

\newcommand{\val}[1]{\vv^{(#1)}}

\newcommand{\error}[2]{\eps_{#1,#2}}

\newcommand{\errs}[3]{C_{#1,#2}^{(#3)}}


\makeatletter
\newcommand{\github}[1]{%
   \href{#1}{\faGithubSquare}%
}
\makeatother

\ifx\conference\neuripsconf

\title{\artifacts: Lightweight and general-purpose long context representations via self-study}
\author{%
  David S.~Hippocampus\thanks{Use footnote for providing further information
    about author (webpage, alternative address)---\emph{not} for acknowledging
    funding agencies.} \\
  Department of Computer Science\\
  Cranberry-Lemon University\\
  Pittsburgh, PA 15213 \\
  \texttt{hippo@cs.cranberry-lemon.edu} \\
}
\begin{document}

\maketitle

\else\ifx\conference\icmlconf
\icmltitlerunning{Submission and Formatting Instructions for ICML 2025}

\begin{document}

\twocolumn[
\icmltitle{\artifacts: Lightweight and general-purpose long context representations via self-study}


\icmlsetsymbol{equal}{*}

\begin{icmlauthorlist}
\icmlauthor{Firstname1 Lastname1}{equal,yyy}
\icmlauthor{Firstname2 Lastname2}{equal,yyy,comp}
\icmlauthor{Firstname3 Lastname3}{comp}
\icmlauthor{Firstname4 Lastname4}{sch}
\icmlauthor{Firstname5 Lastname5}{yyy}
\icmlauthor{Firstname6 Lastname6}{sch,yyy,comp}
\icmlauthor{Firstname7 Lastname7}{comp}
\icmlauthor{Firstname8 Lastname8}{sch}
\icmlauthor{Firstname8 Lastname8}{yyy,comp}
\end{icmlauthorlist}

\icmlaffiliation{yyy}{Department of XXX, University of YYY, Location, Country}
\icmlaffiliation{comp}{Company Name, Location, Country}
\icmlaffiliation{sch}{School of ZZZ, Institute of WWW, Location, Country}

\icmlcorrespondingauthor{Firstname1 Lastname1}{first1.last1@xxx.edu}
\icmlcorrespondingauthor{Firstname2 Lastname2}{first2.last2@www.uk}

\icmlkeywords{Machine Learning, ICML}

\else\ifx\conference\arxivconf

\title{Cartridges: Lightweight and general-purpose long context \\ representations via self-study}

\author{\textbf{Sabri Eyuboglu} \textsuperscript{1}$^*$
\quad
\textbf{Ryan Ehrlich} \textsuperscript{1}$^*$
\quad
\textbf{Simran Arora} \textsuperscript{1,2}$^* $ 
\quad
\textbf{Neel Guha} \textsuperscript{1}
\quad
\textbf{Dylan Zinsley} \textsuperscript{3}
\quad
\textbf{Emily Liu} \textsuperscript{1} \\
\textbf{Will Tennien}  \textsuperscript{1}
\quad
\textbf{Atri Rudra} \textsuperscript{3}
\quad
\textbf{James Zou} \textsuperscript{1}
\quad
\textbf{Azalia Mirhoseini} \textsuperscript{1}
\quad
\textbf{Christopher Ré} \textsuperscript{1} \\[5pt]
\textsuperscript{1}Stanford University \quad \textsuperscript{2} Caltech \quad \textsuperscript{3}University at Buffalo \quad \quad * Equal contribution \\[3pt]
\faEnvelope \enspace \texttt{eyuboglu@stanford.edu, rehrlich@stanford.edu, simarora@stanford.edu} \\
\faGithubSquare \enspace \texttt{\href{https://github.com/HazyResearch/cartridges}{HazyResearch/cartridges}}
}

\begin{document}

\maketitle

\fi

\begin{abstract}
Large language models are often used to answer queries grounded in large text corpora (\eg codebases, legal documents, or chat histories) by placing the entire corpus in the context window and leveraging in-context learning (ICL). 
Although current models support contexts of 100K–1M tokens, this setup is costly to serve because the memory consumption of the KV cache scales with input length. 
We explore an alternative: training a smaller KV cache offline on each corpus. 
At inference time, we load this trained KV-cache, which we call a \artifact, and decode a response. Critically, the cost of training a \artifact can be amortized across all the queries referencing the same corpus. 
However, we find that the naive approach of training the \artifact with next-token prediction on the corpus is not competitive with ICL. 
Instead, we propose \method, a training recipe in which we generate synthetic conversations about the corpus and train the \artifact with a context-distillation objective. 
We find that \artifacts trained with \method replicate the functionality of ICL, while being significantly cheaper to serve. 
On challenging long-context benchmarks, \artifacts trained with \method match ICL performance while using $38.6\times$ less memory and enabling $26.4\times$ higher throughput. \method also extends the model’s effective context length (\eg from 128k to 484k tokens on MTOB) and surprisingly, leads to \artifacts that can be composed at inference time without retraining.
\end{abstract}

\vspace{-2mm}
\section{Introduction}
Large language model (LLM) users often place large text corpora into the context window. For instance, a user or organization may use LLMs to understand codebases~\cite{nam2024using}, financial documents~\cite{islam2023financebench}, legal texts~\cite{guha2023legalbench, zheng2025reasoning}, textbooks~\cite{ouellette2025can}, or personal files~\cite{arora2022can}. 
LLMs excel here due to in-context learning (ICL), enabling accurate responses to diverse queries (e.g., factual Q\&A, summarization, code generation)~\cite{dong2022survey}.

Despite its flexibility, this usage paradigm is costly to serve. ICL requires maintaining a KV cache that grows linearly with the input length. For example, LLaMA 70B needs 84 GB of memory (at 16-bit precision) to answer a single question over a 128k-token context~\cite{dubey2024llama3}. This severely limits user throughput: on a single H100 GPU, LLaMA 8B’s peak throughput (tokens/s) drops by $77\times$ when increasing the context from 1k to 120k tokens (\Cref{fig:micros}). 

Prior work has thus explored ways to reduce KV cache memory usage. For instance, prompt compression methods reduce the number of tokens stored in the cache using summarization, or self-information filtering~\cite{jiang2023llmlingua,li2023unlocking,chuang2024learning}, while KV cache compression techniques directly compress the stored key-value pairs~\cite{ge2023model,zhang2023h2o,tang2024quest,oren2024transformers}. Unfortunately, there are memory-quality tradeoffs associated with these methods: in experiments on challenging long-context tasks, we find that performance  degrades rapidly when applying these methods with compression ratios greater than $2\times$ (see \Cref{fig:tradeoff-within}).

Motivated by the observation that the cost of preparing a KV cache can be amortized across many queries that reference the same corpus, we explore a complementary approach based on offline training. Given a specific text corpus (\eg a patient's medical record) we freeze the LLM and train a smaller KV cache offline by backpropagating loss into the key and value vectors in a process essentially equivalent to prefix tuning~\cite{li2021prefix,lester2021power}. We call the trained KV cache representing the corpus  a ``\artifact.'' At inference time, we load the trained \artifact, append the user’s messages, and decode. Because users repeatedly reference the same corpora  (\eg SEC filings, codebase, personal files), each \artifact can be trained once offline and reused. This approach also integrates cleanly with existing inference servers, which are already designed to manage per-user KV caches~\cite{kwon2023efficient,zheng2024sglang,juravsky2025tokasaurus,ye2025flashinfer}.
 
\begin{figure*}[t]  
    \centering
    \includegraphics[width=\textwidth]{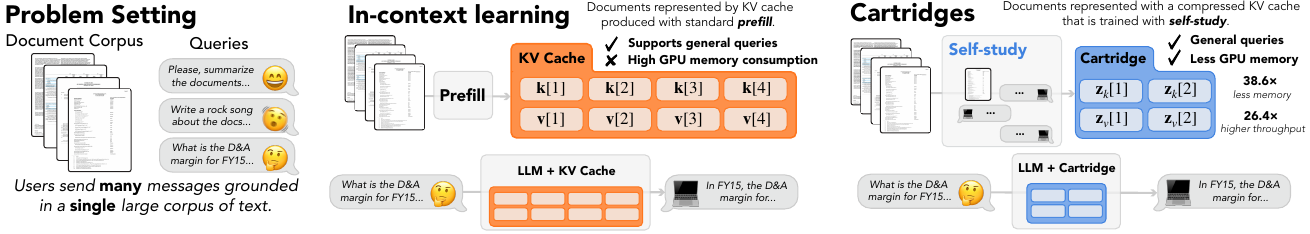}

    \caption{
        \textbf{Producing \artifacts via self-study}. For a given document corpus, we train a \artifact by distilling the corpus into a parameterized KV cache through a process we call \method. At inference time, this \artifact can be loaded into an LLM, which can then be used to answer diverse queries about the corpus, simulating in-context analysis of the corpus while requiring substantially less memory. 
    }
    \label{fig:banner}
\vspace{-3mm}
\end{figure*}

Achieving ICL-equivalent functionality requires \artifacts to satisfy two non-trivial desiderata. First, \artifacts should replicate the generality of ICL, and provide accurate responses across diverse user prompts~\cite{dong2022survey}. 
Second, \artifacts should replicate ICL's  structural awareness---its ability to reason over document structure, and understand how distant parts of a corpus relate or depend on each other (an ability that degrades when using lossy KV-cache compression methods).
%
It is unclear if there is a procedure that satisfies these desiderata, while providing memory efficiency.

The natural baseline approach is to train a \artifact with a next-token prediction objective on the raw corpus. Excitingly, this yields \artifacts that memorize the corpus \textit{perfectly} using $107\times$ less memory than the KV-cache. However, the resulting \artifacts are not general - they degrade the LM's ability to respond to diverse questions beyond regurgitating the corpus (\Cref{fig:micros}). 

To address these challenges and produce general, structurally aware \artifacts for any text corpus, we propose an automated method called \method. \method has two steps:
\begin{enumerate}[leftmargin=*]
    \item \textbf{Synthetic data generation} (\Cref{sec:method-data}): We generate synthetic training data by prompting the model to quiz itself about the corpus content, resulting in a synthetic conversation trace. Training on these lets us avoid training on the same exact text multiple times and improves generality (see \Cref{fig:micros}).
To support corpora that exceed the effective context length of the model, we chunk the corpus when generating synthetic conversations. 
We also curate a set of seed prompts that bias the synthetic conversations towards global reasoning and improve structural awareness (see \Cref{fig:ablations} right). 
    \item \textbf{Context distillation} (\Cref{sec:method-objective}): We train on the synthetic conversations using a context-distillation objective~\cite{bhargava2024prompt,snell2022learning}, which aligns the \artifact-augmented model's next-token distributions with the distributions of the model with the corpus in context. We find that the context distillation substantially improves the quality of the \artifacts compared to next-token-prediction (see \Cref{fig:ablations} center).
\end{enumerate}

%


In summary, given a large corpus of text, our goal is to train a small virtual KV cache, termed \artifact, that when used by the model, mimics the conversational behavior of the model with the entire corpus in context. To do this, we generate synthetic conversations and train the \artifact on them with a context distillation objective --- a recipe we call \method.

\textbf{Evaluations.} We evaluate \artifacts trained with {\method} on a set of challenging benchmarks that pair a single large text corpus ($100$k-$484$k tokens) with a diverse set of queries~\cite{islam2023financebench,adams2024longhealth,tanzer2023benchmark}. 
We make three claims. \textbf{First}, \artifacts extends the quality-memory frontier---averaged across the benchmarks, \artifacts produced with \method match ICL quality while consuming $38.6\times$ less memory, enabling a $26.4\times$ increase in peak throughput (tokens per second) when serving many users with different corpora. These memory reductions and speedups represent an order of magnitude improvement over state-of-the-art cache compression baselines (\eg DuoAttention~\cite{xiao2024duoattention}). \textbf{Second}, \artifacts enables context length extrapolation. On the MTOB benchmark~\cite{tanzer2023benchmark}, where models must translate from Kalamang, a low-resource language, into English, we use \method with \llamaeightb to construct a small \artifact from a $484$k token textbook.
This \artifact outperforms ICL over the first $130,000$ tokens of the textbook by $11.0$ chrF points and matches the ICL performance over a curated subset of the textbook.
\textbf{Third}, \method also yields \artifacts that are composable without joint optimization: multiple \artifacts can be concatenated and queried together, emulating ICL's ability to flexibly answer queries over multiple documents concatenated in context (see \Cref{fig:composition}).

Additionally, we carefully ablate the design decisions in \method and \artifacts (\Cref{sec:results-ablations} and \Cref{app:results}). Notably, we compare \artifacts parameterized as a KV cache~\cite{li2021prefix} with \artifacts parameterized as a LoRA \cite{hu2022lora} and find that KV cache parameterization performs better on both in-domain and out-of-domain tasks.

In this work, we demonstrate how offline KV cache training can dramatically reduce the cost of serving language models in settings where users repeatedly include the same text corpora in context. 
We hope that these cost reductions could enable new applications that are currently intractable, like coding agents with full-repository context or long-term memory in chatbots.

\vspace{-2mm}
\section{Preliminaries}

\begin{figure}[t!]
\centering
\scalebox{0.9}{ 
\setlength{\tabcolsep}{4pt}
\begin{tabular}{@{}lcccc@{}}
\toprule
\textit{Method} &
\begin{tabular}[c]{@{}c@{}}Consumes limited \\ memory \end{tabular} &
\begin{tabular}[c]{@{}c@{}}Retains corpus  \\ information\end{tabular} &
\begin{tabular}[c]{@{}c@{}}Supports diverse \\ prompts \end{tabular} &\\
\midrule
In-context learning 
& \xmark & \cmark & \cmark  \\
Prompt / KV cache compression 
& \cmark & \xmark & \cmark \\
\artifact + Next-token-prediction
& \cmark & \cmark & \xmark  \\
\rowcolor[HTML]{EFEFEF}  \artifact + \method 
& \cmark & \cmark & \cmark\\
\bottomrule
\end{tabular}
}
\caption{\small \textbf{Comparing KV caching strategies.} \artifact improves memory efficiency, while retaining the quality of in-context learning across a broad set of prompts. \cmark~ indicates a strength and \xmark~ indicates a limitation.}
\end{figure}

We begin by discussing related work (\Cref{sec:related_work}), formalizing our problem (\Cref{sec:problem-setup}), and providing background on language models and KV caches (\Cref{sec:lm-background}). 

\label{sec:preliminaries}
\subsection{Related work}\label{sec:related_work}

\ifx\conference\icmlconf
\begin{figure*}[t!]  
    \centering
    \includegraphics[width=\textwidth]{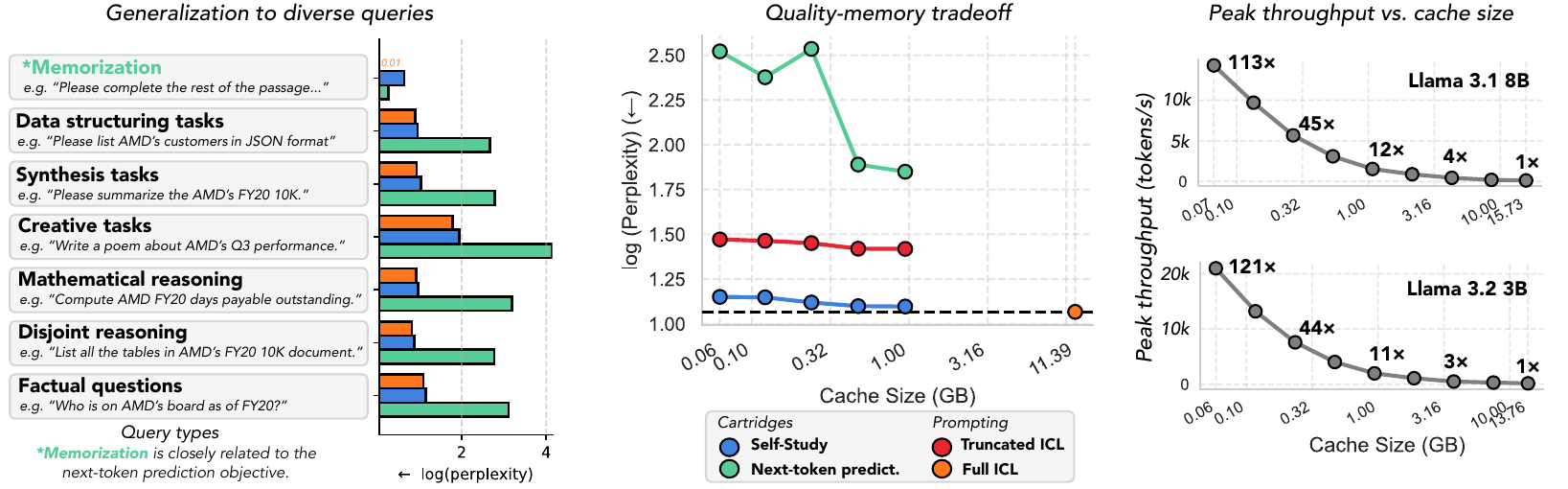}

    \caption{
        \textbf{\artifacts trained with \method balance the generality and memory consumption tradeoff.} 
        We compare four methods on the \genconvo dataset: \artifacts trained with next-token prediction over $\ctx$, \artifacts trained with \method, full ICL, and truncated ICL, a prompt compression method in which we truncate the $\ctx$ to the first $k$ tokens.   
        (\textbf{Left}) We evaluate on different slices from the \genconvo dataset. \artifacts trained with next-token prediction performs well on memorization queries, which resemble it's training distribution, but cannot generalize to other queries like the other methods.
        (\textbf{Center}) The $x$-axis measures the size of the KV cache in GB for the different methods.
        The $y$-axis shows log-perplexity on the \genconvo dataset averaged over the query types.
        (\textbf{Right}) Peak throughput (tokens/s) measured for different cache sizes for \llamathreeb and \llamaeightb with SGLang~\cite{zheng2024sglang} on an 1xH100 (See \Cref{app:results}).
    } 
    \label{fig:micros}
\end{figure*}

\fi

\textit{See Appendix \ref{app:related-work} for a detailed discussion of prior work.}

\vspace{-1mm}
\paragraph{Parameter Efficient Fine-Tuning and Knowledge Injection}
In order to adapt a language model to a specific task or domain, practitioners commonly train a small number of parameters, which augment or modify the original model~\cite{hu2022lora,li2021prefix,lester2021power, meng2024pissa,zaken2021bitfit}. 
In particular, low rank adaptation~\cite{hu2022lora}, where linear layers are adapted with low rank updates, is the de facto parameter efficient fine-tuning technique. 
In our work, we build upon a less popular technique, prefix-tuning~\cite{li2021prefix,lester2021power}, where we optimize internal activations for a set of ``virtual'' tokens preceding the input. 
 
Recent works on \textit{knowledge injection} apply LoRA (or variants~\cite{mao2025lift}) to store a text corpus in a small number of parameters~\cite{zhang2023plug,xiao2023plug,kujanpaa2024knowledge,mao2025lift,su2025parametricrag}. 
This allows models to answer queries using parameteric knowledge as opposed to ICL.
The earliest methods in this line of work inject knowledge with next-token prediction objectives on the corpus~\cite{zhang2023plug,xiao2023plug,kuratov2025cramming}.
Excitingly, recent and concurrent work has also demonstrated the power of synthetic data~\cite{mao2025lift,su2025parametricrag} and context-distillation objectives~\cite{kujanpaa2024knowledge,caccia2025training} in knowledge injection.
In contrast to our work, these papers do not focus on memory reductions or throughput improvements enabled by knowledge injection. 
Furthermore, they do not use a prefix-tuning parameterization, formulate synthetic data generation as a conversation, or seed the conversation with diverse seed prompts, which we find to be critical for performance on long-context tasks and out-of-domain generalization.

Related to our analysis of \artifact composition are a number of works that compose multiple different parameter-efficient adapters through various aggregation operations ~\cite{zhao2024merging,huang2023lorahub,xiao2024configurable,zhao2024loraretriever,yadav2024survey,wu2024mixture,gou2023mixture,li2024mixlora}.

\vspace{-1mm}
\paragraph{Prompt and KV-cache compression} Because the size of the KV cache is a major determinant of language model serving cost, many works have proposed techniques to reduce the size of the cache. One set of approaches focus on making the prompt smaller---explicit methods alter the prompt text through summarization and filtering ~\cite{jiang2023llmlingua,li2023unlocking,chuang2024learning,zhang2024adacomp,pan2024llmlingua}, while implicit methods compress prompt representations into a set of ``soft'' tokens~\cite{chevalier2023adapting,yen2024long,ge2023context,mu2023learning,qin2023dodo, lester2021power}. Another set of approaches exploits observations about the structure of the KV cache~\cite{yu2024effectively,chang2024palu,kim2024lexico}, often finding that because a small number of keys dominate the attention scores of subsequent queries, non-impactful key-value pairs (or tokens) can be dropped~\cite{ge2023model,zhang2023h2o,tang2024quest,oren2024transformers, li2024snapkv} or merged~\cite{wang2024model,zhang2024cam,wan2024d2o}. 
Compared with our work, these methods use relatively little compute to compress the KV cache. We focus on the setting where scaling the amount of compute used to compress the KV cache makes sense because contexts are shared across many requests.

\vspace{-1mm}
\paragraph{Architectural changes} A large body of work has studied architectural changes to the original multi-head attention operation~\cite{vaswani2017attention} with the aim of reducing the memory footprint of the KV cache or replacing it with a memory object of constant size. 
Unlike \method and the compression approaches discussed above, which can be readily applied to any pre-trained Transformer, these architectural changes typically require retraining the model from scratch or using complex architecture conversion techniques~\cite{zhang2024lolcats}.

In order to reduce the memory footprint of the KV cache, these architectures leverage sparsity ~\cite{beltagy2020longformer,child2019generating,zaheer2020big, team2024gemma}, reduce the number of key and value heads~\cite{shazeer2019fast, ainslie2023gqa}, make the key and value heads low-rank~\cite{liu2024deepseek}, or replace the KV cache with a constant-size memory object~\cite{ zhang2025tensor, arora2024simple,gu2023mamba,yang2024fla,yang2024gla}.
In particular, grouped-query attention~\cite{ainslie2023gqa} is the de-facto multi-head attention variant, used in frontier language models like Llama 3~\cite{dubey2024llama3}. In our experiments, we compare against ICL with grouped-query attention.
Other variants --- such as multi-head latent attention~\cite{liu2024deepseek} or linear attention~\cite{arora2024simple,gu2023mamba} --- are gaining popularity and feature in large-scale reasoning models~\cite{guo2025deepseek} and hybrid models~\cite{li2025minimax,blakeman2025nemotron,team2024jamba}.

Most related to our work are recent architectures (\eg Titans~\cite{behrouz2024titans}, TTT~\cite{sun2024learning}) that use a constant-sized memory object (like in linear attention) but apply gradient descent-like memory updates~\cite{sun2024learning,yang2025parallelizinglineartransformersdelta,behrouz2025atlas,behrouz2024titans,behrouz2025s}.
Like our work, these architectures are motivated by the observation that gradient descent is very effective at compressing text into constant space and demonstrate the promise of using gradient descent at test time for long-context tasks. 
In contrast with our work, these architectures need to be trained from scratch, they have not been validated on large scale models, and do not match the quality of attention on recall-intensive tasks~\cite{arora2024simple,behrouz2025atlas}.

\ifx\conference\arxivconf

\fi

\vspace{-2mm}

\vspace{-1mm}
\subsection{Problem setup}\label{sec:problem-setup}
We assume a setting in which users issue a stream of diverse queries about a common corpus of text. We denote the corpus as $\ctx$ and the query set as $\queries = \{\query_1, \query_2,\hdots,\query_m\}$. Illustrative examples of $\ctx$ include legal filings, financial documents, code repositories, chat histories, and medical records. 

\begin{examplebox}[Example: Financial Analysis]
    $\ctx$ may correspond to the 2022 Form 10-K filing~\cite{sec_read10k} for AMD, which is almost 100k tokens. The queries an analyst might ask an LLM to answer with respect to this form are diverse, including: (1) recalling factual information, (2) performing mathematical reasoning over values, or (3) even generating creative responses (e.g., a poem) grounded in the 10-K's information. 
\end{examplebox}

Let $\resps = \{\resp_1, \resp_2,\hdots,\resp_m\}$ denote the responses the LLM produces for the queries. We have two objectives. First, we wish to maximize the quality of responses $\resps$ under some quality metric (\eg accuracy). Second, we wish to minimize the LLM's memory footprint while it is answering questions with respect to the document. This is because larger memory footprints decrease throughput and necessitate more hardware to serve the same number of users (\Cref{fig:micros}, Right).

\vspace{-2mm}
\subsection{Language models and KV caches} 
\label{sec:lm-background}
Recall that an LLM $\llm$ accepts as input a sequence of $N$ tokens $\mathbf{x} \in \mathcal{V}^n$  drawn from a discrete vocabulary $\vocab \subset \mathbb{Z}$ of tokens, each represented by a unique integer. 
The output, which we denote $\llm(\cdot | \mathbf{x})$, corresponds to a categorical distribution over a vocab $\vocab$ conditioned on the prefix  $\mathbf{x} \in \vocab^n$.

Inside the language model, each token $x[i]$ in $\mathbf{x}$ is embedded into a $d$-dimensional space, yielding a matrix $\mathbf{u} \in \mathbb{R}^{n\times d}$. 
The matrix $\mathbf{u}$ is passed through a stack of $L$ model layers, which each mix the matrix along the $n$ and $d$ dimensions, with layer $\ell$ outputting $\mathbf{y}^l \in \mathbb{R}^{n\times d}$. 
The final $\mathbf{y}^L$ is mapped to the logits over $\vocab$ with a linear projection.

Most modern language models use the Transformer architecture based on self-attention~\cite{vaswani2017attention}. Given an input $\mathbf{u} \in \mathbb{R}^{n\times d}$ for sequence length $n$ and embedding dimension $d$, it computes the output $\mathbf{y}^l \in \mathbb{R}^{n \times d}$ via the softmax over projections 
\(
\mathbf{q}, \mathbf{k}, \mathbf{v} = \mathbf{u} \mathbf{W}_q, \mathbf{u} \mathbf{W}_k, \mathbf{u} \mathbf{W}_v
\):
\begin{equation}
\mathbf{y}[i] = \sum_{j=1}^i\frac{\exp(\mathbf{q}[i]^\top \mathbf{k}[j] /\sqrt{d}) \mathbf{v}[j] }{\sum_{t = 1}^{i} \exp(\mathbf{q}[i]^\top \mathbf{k}[t] /\sqrt{d})} 
\label{eq:softmax_attention}
\end{equation}
where weight matrices $\mW_q$, $\mW_k$ and $\mW_v$ for each layer are learned during training. 

When generating from $\llm$, we generate one token at a time by sampling from $\llm(\cdot \mid \mathbf{x})$ and appending the sampled token to $\mathbf{x}$.
Critically, the attention operator is causal: every output $\mathbf{y}[i]$ is conditioned on prior tokens. 
This allows us to avoid recomputing the keys and values for the prior tokens by storing them in a KV cache $\{\mathbf{k}[j], \mathbf{v}[j]\}_{j=1}^{i}$, which grows in $i$.
Thus, generation proceeds in two phases: (1) \textit{prefill}, where we compute the KV cache for the initial prompt $\mathbf{x}$ and (2) \textit{decode}, where we generate the response token by token and append to the KV cache.
After prefill, if $\mathbf{x}$ consists primarily of the corpus $\ctx$, the KV cache effectively serves as a representation of the corpus $\ctx$.
This is why including a long corpus $\ctx$ in the context $\mathbf{x}$ produces large memory footprints, as the size of the KV cache scales linearly in the length of $\mathbf{x}$.

\section{The \artifact paradigm}
\label{sec:artifact}

In this section, we describe the \artifact paradigm, in which we generate representations of the corpus $\ctx$ offline with training, instead of the standard approach of constructing them on-the-fly with prefill.

\vspace{-2mm}
\subsection{Formalizing \artifacts}\label{sec:desiderata}
\label{sec:artifact-desiderata}

Our goal is to train a \artifact for a given corpus $\ctx$. A \artifact is a small set of parameters $\ctxrep \in \mathbb{R}^*$ (\ie an adapter~\cite{li2021prefix,hu2022lora}) that augments an LLM $\llm$ and causes it to behave as if it had $\ctx$ in its context window. Formally, let $\llm_Z( \cdot | \query )$ denote the distribution of $\llm$ augmented with $Z$ given a query $\query$. For all $\query \in \queries$, we want to ensure that samples $r_Z \sim \llm_Z(\cdot | \query)$ are as good or better than the ICL sample $r_q \sim \llm(\cdot | \ctx \oplus \query)$, according to some query-specific scoring function. In order for $\llm_Z(\cdot | \query)$ to match or exceed the behavior of $\llm(\cdot | \ctx \oplus \query)$, three important criteria should be met.

\begin{itemize}[leftmargin=*]
    \item \textbf{Displays generality}: Because $\queries$ might span a diverse range of question types (e.g., mathematical reasoning, factual recall comprehension, summarization, and more), it is essential that $\llm_Z$ can generalize across different $\query \in \queries$. \textbf{This is non-trivial because $\queries$ is unknown when $Z$ is being learned offline.} If $\llm_Z$ does not generalize, then practitioners may need to learn different $Z$ for different distributions of queries, which increases the cost of the \artifact. Ideally, $Z$ should only need to be learned once, yet work for multiple types of queries.
    \item \textbf{Captures long range dependencies}: $Z$ should also capture long range dependencies contained within $\ctx$. In many settings, correctly answering different $\query \in \queries$ requires reasoning about the order of information presented in $\ctx$. It is not clear how to capture these dependencies in $Z$.
    \item \textbf{Capable of composition}: Ideally, the representation of $Z$ and mechanism by which $\llm$ utilizes it could allow for composition, without any particular joint training of \artifacts. Given $Z_1$ and $Z_2$ corresponding to $\ctx_1$ and $\ctx_2$, ideally $\llm_{[Z_1, Z_2]}(\query)$ is similar to $\llm(\cdot | \ctx_1 \oplus \ctx_2 \oplus \query])$
\end{itemize}

\vspace{-2mm}
\subsection{Parameterizing \artifacts}\label{sec:representing_cartridge}
\label{sec:artifact-parameterization}

We parameterize $\ctxrep$ using a simplified version of prefix-tuning~\cite{li2021prefix}.
Specifically, we allocate a KV cache composed of \textit{trainable} key and value vectors $\mathbf{z}_\text{k}, \mathbf{z}_\text{v} \in \mathbb{R}^{p \times d}$. 
The size of the full $Z\in \mathbb{R}^{L \times p \times d \times 2}$ is controlled by the hyperparameter $p$.
The memory footprint of $Z$ is equivalent to a KV cache for a prompt with $p$ tokens.

In ICL, the KV cache for $\llm_\ctx(q)$ (where $\ctx$ is of length $n_\ctx$ and $\queries$ is of length $n_\queries$) would contain $n_\ctx + n_\queries$ key-value pairs, with the first $n_\ctx$ corresponding to $\ctx$ and the last $n_\queries$ corresponding to $\queries$:

\ifx\conference\neuripsconf 
\[
\begin{minipage}{0.50\textwidth}
\centering
\text{ICL KV Cache} \vspace{-1.5em} \\
\begin{align*}
\underbrace{(\mathbf{k}_1, \mathbf{v}_1), \dots, (\mathbf{k}_{n_\ctx}, \mathbf{v}_{n_\ctx})}_{\text{KV pairs for~}\ctx},
\underbrace{(\mathbf{k}_{n_\ctx + 1}, \mathbf{v}_{n_\ctx + 1})\dots}_{\text{KV pairs for } \query }
\end{align*}
\end{minipage}
\quad
\begin{minipage}{0.50\textwidth}
\centering
\text{\artifact KV Cache} \vspace{-1.5em} \\
\begin{align*}
\underbrace{(\mathbf{z}^\text{k}_1, \mathbf{z}^\text{v}_1), \dots, (\mathbf{z}^\text{k}_{p}, \mathbf{z}^\text{v}_{p})}_{\text{Trainable KV pairs in }Z}, 
\underbrace{(\mathbf{k}_{n_\ctx + 1}, \mathbf{v}_{n_\ctx + 1})\dots}_{\text{KV pairs for } \query }
\end{align*}
\end{minipage}
\]
\fi
\ifx\conference\arxivconf
{
\small
\[
\begin{minipage}{0.50\textwidth}
\centering
\text{ICL KV Cache} \vspace{-1.5em} \\
\begin{align*}
\underbrace{(\mathbf{k}[1], \mathbf{v}[1]), \dots, (\mathbf{k}[{n_\ctx}], \mathbf{v}[{n_\ctx}])}_{\text{KV pairs for~}\ctx},
\underbrace{(\mathbf{k}[{n_\ctx + 1}], \mathbf{v}[{n_\ctx + 1}])\dots}_{\text{KV pairs for } \query }
\end{align*}
\end{minipage}
\quad
\begin{minipage}{0.50\textwidth}
\centering
\text{\artifact KV Cache} \vspace{-1.5em} \\
\begin{align*}
\underbrace{(\mathbf{z}_\text{k}[1], \mathbf{z}_\text{v}[1]), \dots, (\mathbf{z}_\text{k}[p], \mathbf{z}_\text{v}[p])}_{ \text{Trainable KV pairs in }Z}, 
\underbrace{(\mathbf{k}[{1}], \mathbf{v}[{1}])\dots}_{\text{KV pairs for } \query }
\end{align*}
\end{minipage}
\]
}
\fi
\ifx\conference\icmlconf

\text{ICL KV Cache} \vspace{-1.5em} \\
\begin{align*}
\underbrace{(\mathbf{k}_1, \mathbf{v}_1), \dots, (\mathbf{k}_{n_\ctx}, \mathbf{v}_{n_\ctx})}_{\text{KV pairs for~}\ctx},
\underbrace{(\mathbf{k}_{n_\ctx + 1}, \mathbf{v}_{n_\ctx + 1})\dots}_{\text{KV pairs for } \query }
\end{align*}
\text{\artifact KV Cache} \vspace{-1.5em} \\
\begin{align*}
\underbrace{(\mathbf{z}^\text{k}_1, \mathbf{z}^\text{v}_1), \dots, (\mathbf{z}^\text{k}_{p}, \mathbf{z}^\text{v}_{p})}_{\text{Trainable KV pairs in }Z}, 
\underbrace{(\mathbf{k}_{n_p + 1}, \mathbf{v}_{n_p + 1})\dots}_{\text{KV pairs for } \query }
\end{align*}
\fi

To train a \artifact, we substitute the key-value pairs corresponding to $\ctx$ with $\ctxrep$, and directly optimize them by back-propagating the loss into the key and value vectors. 
\textbf{Critically, we freeze all parameters of the model, only training the key and value vectors in $Z$.} We discuss the choice of loss in \Cref{sec:method-objective}.

\vspace{-3mm}
\paragraph{Initialization} 
Prior work finds that optimizing a randomly initialized cache $\ctxrep$ is unstable and leads to degraded performance~\cite{li2021prefix}. 
Instead, these works initialize the trainable cache with a smaller dimensionality $d$ and then re-project it to the original dimension with an MLP. 
In contrast, we find that proper initialization of $\ctxrep$ allows us to directly optimize the full cache without reparametrization. 
Specifically, we initialize $\ctxrep$ to the KV cache corresponding to the first $p$ tokens of the corpus $\ctx$.
Alternatively, we could use a summary of the corpus or filter tokens using off-the-shelf prompt compression strategies~\cite{xiao2024duoattention}.
In \Cref{sec:results-ablations}, we show that our initializations lead to stable training and faster convergence than the random initialization.

\textit{Why this parameterization?} We note that the parameter-efficient fine-tuning literature provides other ways to augment an LLM with a set of additional parameters, in particular low-rank adaptation (LoRA)~\cite{li2021prefix,hu2022lora,lester2021power}. 
In \Cref{sec:results-ablations}, we perform a comprehensive comparison of \artifacts parameterized with prefix-tuning and LoRA.

\vspace{-2mm}
\subsection{Serving \artifacts}
\label{sec:artifact-serving}

A \artifact can be served efficiently with minimal changes to existing LLM inference servers~\cite{zheng2024sglang,kwon2023efficient,juravsky2025tokasaurus}. Because a \artifact is a KV cache, it can be loaded directly into the KV cache slots using existing mechanisms for handling cached prefixes. LLM inference servers are heavily optimized for managing distinct KV-caches for multiple users~\cite{ye2025flashinfer}, meaning \artifacts can be served at high throughput using existing inference servers. Decoding tokens with a \artifact is identical to serving a request with a prefix of length $p$ (the hyperparameter denoting the number of trainable tokens in the \artifact). This contrasts with other methods like LoRA, which require custom infrastructure to serve efficiently to multiple users~\cite{chen2024punica}. See Figure \ref{fig:micros} for the relationship between prefix length and throughput.

\vspace{-2mm}

\ifx\conference\neuripsconf

\fi

\section{\method: A self-supervised method for training \artifacts}
\label{sec:method}
In this section, we describe \method, a simple approach for training a \artifact $Z$ on any corpus of text. 
The design of \method is motivated by experiments showing how \artifacts trained with a simpler recipe fail to generalize to diverse user queries.

\ifx\conference\arxivconf
\begin{figure*}[t!]  
    \centering
    \includegraphics[width=\textwidth]{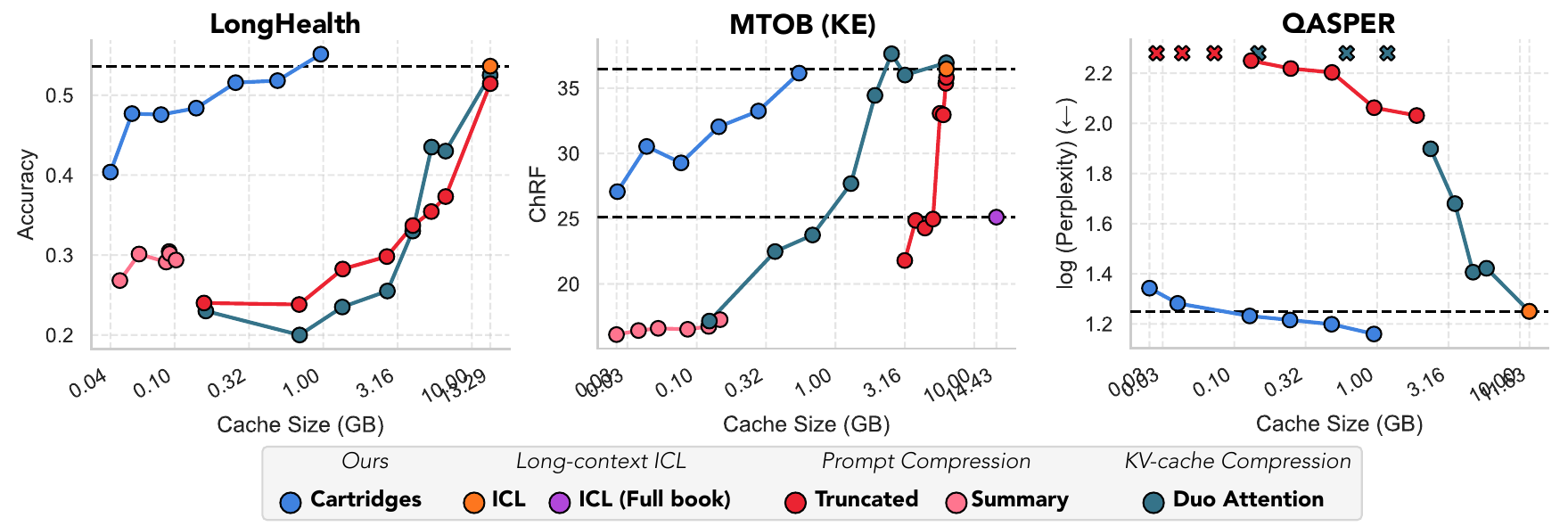}

    \caption{
        \artifacts \textbf{matches ICL quality with lower memory costs.} 
        We measure \llamathreeb response quality ($y$-axis) against KV cache memory ($x$-axis) for different methods, at different KV cache sizes. The dashed line marks the quality of standard ICL. 
        }
    \label{fig:tradeoff-within}
    \vspace{-2mm}
\end{figure*}

\fi

\vspace{-2mm}
\paragraph{Motivating observations}
The naive method for constructing a \artifact would be to fine-tune the parameters of $Z$ with the next token prediction objective on the corpus text directly. We show results experimenting with this approach in \Cref{fig:micros}, where we evaluate on a dataset derived from FinanceBench~\cite{islam2023financebench}, which we refer to as \genconvo (see \Cref{app:datasets} for details). \genconvo contains multiple types of questions (\eg synthesis, reasoning). 
We find that the naïve next-token prediction approach can memorize with near perfect perplexity (\Cref{fig:micros} left), while consuming $107\times$ less memory than ICL (\Cref{fig:micros} center). 
However, generalization to other slices is poor, as shown in \Cref{fig:micros}. We seek a training objective that allows the responses from a model that uses the \artifact to generalize to a diverse set of user queries, resembling ICL.

Motivated by these observations, we describe a synthetic data generation recipe in \Cref{sec:method-data} and a context-distillation objective in \Cref{sec:method-objective}. As we show in \Cref{fig:micros}, \artifacts trained with this approach can generate responses to many types of queries that match the quality of queries generated with ICL.
See \Cref{fig:banner} for a visualization of the \artifact approach.

\ifx\conference\icmlconf

\fi

\vspace{-2mm}
\subsection{Self-supervised synthetic data to avoid overfitting}
\label{sec:method-data}

Towards training general \artifacts, we propose using LLM generated synthetic data to generate our training dataset $\mathcal{D}_{\text{train}}$. 

\vspace{-2mm}
\paragraph{Overall synthetic data pipeline} Our overall pipeline puts information from the corpus $\ctx$ in context and prompts the model to have a conversation with itself about the corpus to generate the synthetic query-response pairs as shown in \Cref{alg:synthetic-generation}. We represent the concatenation of two vectors with $x \oplus y$.

\begin{algorithm}
    
    \caption{\method: Data Generation}
    \textbf{Input:} $\ctx$ : \texttt{Corpus}, $\llm$ : \texttt{Model}  \\ 
    \textbf{Output:} $\{\mathbf{a}_1, \mathbf{b}_1, \dots, \mathbf{a}_k, \mathbf{b}_k\}$ :  \texttt{Convo}
    \begin{algorithmic}[1]
    \label{alg:synthetic-generation}

    \State $\subctx \gets$ \texttt{chunk}($\ctx$)  \Comment{\textbf{(1)} Get a \textbf{subcorpus} of $\ctx$ that fits in the context window}
    \State $\seed \gets$ \texttt{get\_seed\_prompt}()  \Comment{\textbf{(2)} Get a prompt to \textbf{seed} the first message from $A$}
    \For{$i = 1$ to $k$} \Comment{\textbf{(3)} Sample a \textbf{conversation} with $k$ back and forths}
        \State $\mathbf{a}_i \sim \llm( \cdot \mid \subctx \oplus \seed \oplus \mathbf{a}_{1} \oplus \dots \oplus \mathbf{b}_{i-1})$  \Comment{\textbf{(3.1)} Sample $A$'s message with $\subctx$ and $\seed$ in context}
        \State $\mathbf{b}_i \sim \llm( \cdot \mid \subctx \oplus \mathbf{a}_{1} \oplus \dots \oplus \mathbf{b}_{i-1} \oplus \mathbf{a}_{i})$  \Comment{\textbf{(3.2)} Sample $B$'s message with $\subctx$ in context}
    \EndFor
    \State \textbf{return} $ \{\mathbf{a}_1, \mathbf{b}_1, \dots, \mathbf{a}_k, \mathbf{b}_k\}$ 
    \end{algorithmic}
    \end{algorithm}

The conversation is generated by iteratively sampling generations from two LLM participants $A$ and $B$ (which are the same model). We maintain two different conversation histories: $A$'s starts with a \textit{user} message containing a seed prompt $s$ (\eg \textit{``Please start a conversation by asking a question about the document above."}) followed by alternating \textit{assistant} and \textit{user} messages from $A$ and $B$, respectively. $B$'s conversation history does not include the seed prompt and contains the same messages as $A$'s but with the roles of $A$ and $B$ swapped. Both have the subcorpus $\subctx$ in the system prompt. To build a training dataset, we sample $\numtrain$ independent conversations and concatenate the messages from $A$ and $B$ into a single sequence of tokens:
\begin{equation}
	\mathcal{D}_\text{train} = 
	\{
	\mathbf{x}^{(j)} = 
	\mathbf{a}_1^{(j)} \oplus 
	\mathbf{b}_1^{(j)} \oplus 
	\mathbf{a}_2^{(j)} \oplus 
	\mathbf{b}_2^{(j)} \oplus 
	\dots \oplus 
	\mathbf{a}_k^{(j)} \oplus 
	\mathbf{b}_k^{(j)}
	\}_{j=1}^{\numtrain}
    \label{eq:dataset}
\end{equation}
where each $\mathbf{x}^{(j)}$ is a concatentation of the messages. 
Note that all of the datasets on which we evaluate in the main paper involve a single-turn. So, we set $k=1$, generating a synthetic conversation with one user message and one assistant message.

Note that the \texttt{chunk} and \texttt{get\_seed\_prompt} functions expose two different ways to control the data distribution of the synthetic data. 
We find that these two design decisions are critical for training high quality \artifacts with \method.

\vspace{-2mm}
\paragraph{Chunking}  We use short subcorpora $\tilde{c}$ (between 512 and 4096) tokens to let the LLM focus on different parts of the corpus when generating data.
This is motivated by observations in prior work~\cite{liu2024lost, narayan2025minions}.
Furthermore, chunking also allows us to train \artifacts on corpora longer than the model's context window. 

\vspace{-2mm}
\paragraph{Seed prompts}
Instead of using just one seed prompt, we curate a list of five different seed prompt types:     \textit{structuring}, \textit{summarization},
    \textit{question}, \textit{use cases}, and
    \textit{creative}.
The full list of seed prompts used in our experiments is provided in \Cref{app:method}. 
Critically, in all our experiments the seed prompts are \textbf{generic}: they do not mention anything related to the specifics of the corpora we evaluated (\eg no mention of translation for MTOB or medical terms for LongHealth). 
We use the same set of seed prompts in all of our main results.
In \Cref{sec:results-ablations}, we ablate the use of diverse seed prompts and find that it improves performance over a single generic seed prompt by up to $4.8$ accuracy points ($43.6 \rightarrow 48.4$ on \longhealth). 

\ifx\conference\icmlconf
\begin{figure*}[t]  
    \centering
    \includegraphics[width=\textwidth]{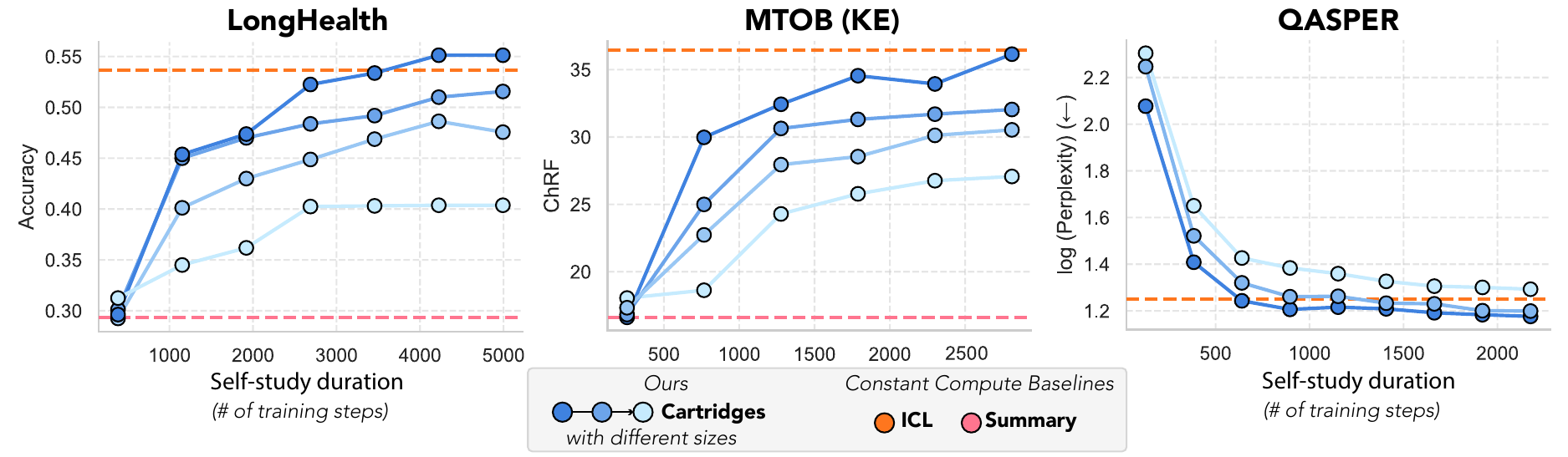}

    \caption{\textbf{Scaling \method compute.} These plots show how quality improves as we scale the training compute with \method. 
        In all plots, the $x$-axis shows the total number of global training steps with batch size 64 and maximum sequence length 1024. 
        No synthetically generated data is reused (\ie training proceeds for one epoch).
        Curves are provided for \artifacts of varying sizes ($p \in \{128, 512, 2048, 8192\}$).
        (\textbf{Left}) The $y$-axis shows accuracy on \longhealth~\cite{adams2024longhealth} with \llamaeightb.
        (\textbf{Middle}) The $y$-axis shows the chrF on \mtob~\cite{tanzer2023benchmark} with \llamathreeb.
        (\textbf{Right}) The $y$-axis shows log-perplexity (lower is better) on \qasper~\cite{dasigi2021dataset} with \llamathreeb.
    }
    \label{fig:scaling-indexing}

\end{figure*}
\fi

\vspace{-2mm}
\subsection{\method context-distillation objective}
\label{sec:method-objective}

Given a fine-tuning dataset $\mathcal{D}_\text{train}$, we adapt standard techniques from the model distillation literature~\cite{kim2016sequence,snell2022learning,kujanpaa2024knowledge}. We let $\llm(\cdot | \mathbf{x})$ denote the next token distribution given some input text $\mathbf{x}$. Our \textit{teacher} is the model with the subcorpus, $\subctx$, in context $\llm( \cdot | \subctx)$ and our \textit{student} is the same model adapted with a trainable cache   $\llm_{\ctxrep}( \cdot)$.
We use a classic distillation objective~\cite{hinton2015distilling} that minimizes the KL-divergence between the teacher and student next-token distributions over a sequence of tokens $\mathbf{x}$ and the corresponding subcorpus used to generate them $\subctx$.
\vspace{-2mm}
\begin{equation}
    \underset{\ctxrep}{\arg\min} \quad
        \sum_{(\mathbf{x}, \subctx) \in \mathcal{D}_\text{train}}  
            \sum_{i=1}^{|\mathbf{x}|}  
                D_{\text{KL}}\bigg(
                    \llm( \cdot | \subctx \oplus \mathbf{x}[:i]) \quad || \quad \llm_{\ctxrep}( \cdot | \mathbf{x}[: i])   
                \bigg)
\end{equation}

In \Cref{app:results}, ablate the use of the context-distillation objective and show that improves accuracy when controlling for the amount of synthetic data (\eg $3.7$ accuracy points on \longhealth).

\vspace{-2mm}
\section{Results}
\label{sec:results}

\ifx\conference\neuripsconf

\fi
\ifx\conference\arxivconf

\fi

We describe experiments evaluating the effectiveness of \artifacts trained with \method in various long-context scenarios. 
Our results support the following claims. 
\textbf{First}, \artifacts trained with \method can match or outperform ICL while maintaining generality and reducing serving costs (\Cref{sec:results-within}). 
\textbf{Second}, \method is effective on corpora longer than the context window of the LLM (\Cref{sec:results-extending}).
\textbf{Third}, when we concatenate two different \artifacts without any joint training, the model can respond to queries requiring information from both \artifacts (\Cref{sec:results-composition}). 
Finally, we include ablations to assess the relative benefits of different aspects of \method and \artifacts (\Cref{sec:results-ablations}).

\vspace{-2mm}
\paragraph{Datasets} We study datasets consisting of diverse $(\query, \resp)$ pairs about a single long document. Across datasets, $\ctx$ ranges between 100k and 484k tokens. Our datasets are drawn from popular long-context benchmarks, with some used as-released and others modified to meet this structure. These include: \longhealth~\cite{adams2024longhealth}, \mtob~\cite{tanzer2023benchmark}, and QASPER~\cite{dasigi2021dataset}. We evaluate LLM response quality using accuracy for \longhealth, log perplexity for QASPER, and character n-gram f-score (chrF) for MTOB~\cite{tanzer2023benchmark, popovic2015chrf}. Because each dataset effectively consists of a ``single'' document, we train a single \artifact per dataset and evaluate it on the queries response pairs $(\query, \resp)$. \Cref{app:datasets} provides further details. 

\ifx\conference\icmlconf
\begin{figure*}[t]  
    \centering
    \includegraphics[width=\textwidth]{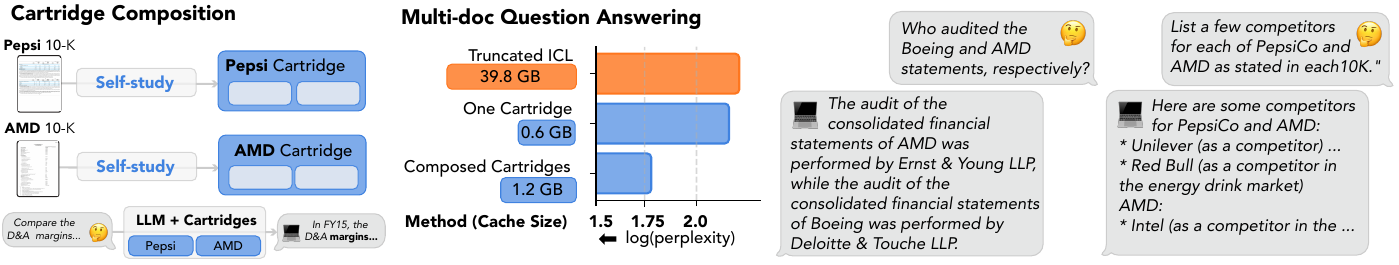}

    \caption{
        \textbf{\artifact Composition.}  
        (\textbf{Left}) Illustration of \artifact composition, where two independently trained \artifacts (one for a Pepsi 10-K and one for an AMD 10-K) are concatenated without any additional training.
        (\textbf{Middle}) We evaluate composition on a dataset of multi-document questions requiring information in two different $\approx$100k token documents with \llamathreeb~(see \Cref{app:datasets}). 
        The $x$-axis shows log-perplexity (lower is better) on gold-standard answers.
        We compare \artifact composition with an (a) ICL baseline where we truncate the document to fit in the 128k token context length and (b) an \artifact baseline where we only include the \artifact for one of the documents.
        (\textbf{Right}) Examples of responses to multi-document questions using composed cartridges.
    }
    \label{fig:composition}
\vspace{-2mm}
\end{figure*}

\fi

\vspace{-2mm}
\subsection{Pushing the quality/cost tradeoff frontier}
\label{sec:results-within}
We assess how \artifacts produced with \method fare in quality and memory consumption against baselines for \longhealth and QASPER on \llamathreeb. 
For both datasets, $\ctx$ fits within the model context window ($128$k tokens). 
We compare to traditional ICL, two prompt compression baselines (prompt truncation and prompt summarization using GPT-4o  \cite{openai2024gpt4ocard}), and a state-of-the-art KV cache compression baseline (Duo Attention~\cite{jiang-etal-2023-llmlingua,xiao2024duoattention}). We evaluate memory use in terms of KV cache size: the size of the KV cache for the ICL model and prompt compression methods, the size of the \artifact, and the size of the compressed KV cache for KV cache compression methods like DuoAttention.

\Cref{fig:tradeoff-within} presents our main results. On both \longhealth and QASPER, we find cache sizes at which \artifacts outperforms ICL. Compared against ICL, \artifacts offers substantial memory savings at comparable performance: up to $10\times$ for \longhealth,  and up to $100\times$ for QASPER. In contrast, compression baseline methods see performance degradations at compression factors as low as $2\times$. Crucially, the small memory footprint of \artifacts allows for much higher peak throughput (tokens/s). 
As \Cref{fig:micros} (right) shows, cache sizes which match performance of ICL allow for almost $26\times$ higher throughput.

\ifx\conference\neuripsconf

\fi
\ifx\conference\arxivconf
\begin{figure*}[t!]  
    \centering
    \includegraphics[width=\textwidth]{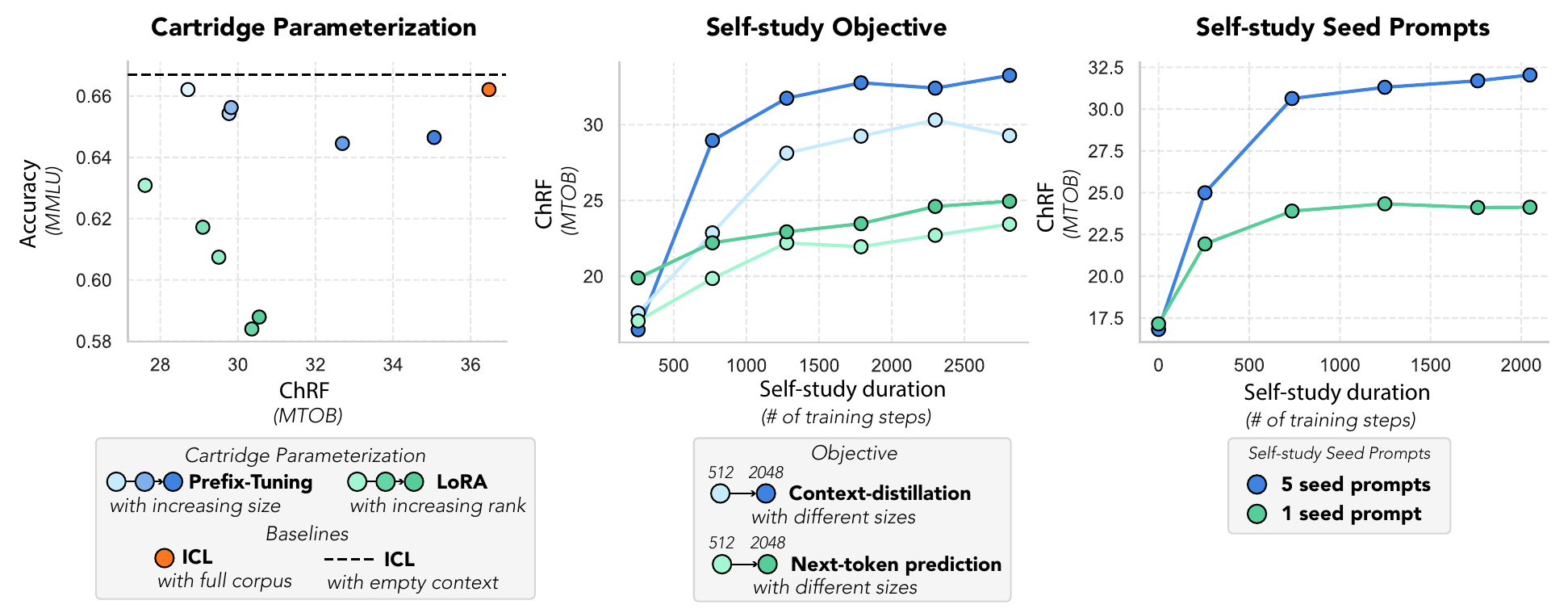}

    \caption{
        \textbf{Ablating \artifact and \method design choices.} 
        Ablations were performed on the \mtob dataset (see \Cref{app:results} for full ablation experiments). 
        (\textbf{Left}) We train \artifacts using two different parameterizations: simplified prefix-tuning (as described in \Cref{sec:artifact-parameterization}) and low-rank adaptation (LoRA)~\cite{hu2022lora}. The $x$-axis shows accuracy on MMLU and the $y$-axis shows accuracy on the target dataset. Each point represents a different \artifact size.
        \textbf{Center} We train \artifacts with \method using two loss functions: a next token prediction loss (green) and a distillation loss (blue). The $x$ axis is the number of training steps, and the $y$ axis is accuracy. Each hue represents a different \artifact size.
        (\textbf{Right}) We generate synthetic data according to \Cref{alg:synthetic-generation} and ablate the choice of seed prompts sampled on Line 2. We consider two approaches: using a single, broad seed prompt (Green) or randomly sampling one of five different types of seed prompts (Blue). The $x$ axis is the number of training steps, and the $y$ axis is accuracy.
        }
    \label{fig:ablations}
    \vspace{-2mm}
\end{figure*}

\fi

We also observe that \artifact performance scales as we increase the amount of compute used in self-study: the longer an \artifact is trained, the greater task performance. \Cref{fig:scaling-indexing} plots the performance for differentially sized \artifacts as a function of the number of training steps. Across all sizes, we observe a steady positive correlation between performance and compute. 

\vspace{-2mm}
\subsection{Extending the effective context window}
\label{sec:results-extending}

We evaluate whether \method allows us to accurately process corpora that exceed the context window length. 
To study this, we consider the MTOB dataset, and \llamaeightb, which has a context window of $128$k tokens. 
MTOB provides two different long documents: a full $484$k token latex textbook and a shorter $60$k token version, which was manually-curated by the dataset authors to exclude content not relevant to the translation task. 
Even though the $484$k textbook is $356$k tokens \textit{longer} than \llamaeightb's context window length, we can produce a \artifact for the full textbook using the chunking strategy of \method.

\Cref{fig:tradeoff-within} (middle plot) shows the performance of \artifacts of various sizes trained with \method. 

As a point of comparison, we provide the results for KV cache baseline methods on the smaller $60$k token textbook, and also include ICL on a truncated version of the long textbook. Like above, we observe that \artifact can match the performance of ICL on the hand-curated $60$k token version, while requiring substantially less memory and only having access to the $484$k token version, which exceeds the context window of \llamaeightb.
\artifacts also outperform competitive baselines at every KV cache size, by up to 11.0 chrF points. 

\vspace{-2mm}
\subsection{Ablating \method design choices}
\label{sec:results-ablations}

We perform ablations to study different aspects of \method and \artifact parameterization. 
We provide full results in Appendix \ref{app:results} and highlight key findings here and in \Cref{fig:ablations}.

\paragraph{\artifact Parameterization} In \Cref{sec:artifact-parameterization}, we discuss how we parameterize the \artifact with a trainable KV cache, which is equivalent to a simplified version of prefix tuning~\cite{li2021prefix}.
There are a number of other ways we could parameterize the \artifact, notably low-rank adaptation (LoRA), an extremely popular parameter effcient fine-tuning method~\cite{hu2022lora}. 

We compare the prefix-tuning parameterization with LoRA (see \Cref{app:results-parameterization} for full results). 
First, we find that the prefix-tuning parameterization is more effective than a memory-matched LoRA parameterization on queries related to the corpus. For example, with \artifacts of size $\sim0.6$ GB on \mtob, prefix-tuning outperforms LoRA by $4.5$ ChRF points. (See \Cref{fig:parameterization} for results on \longhealth and \qasper.)
Even more interesting is the gap between these parameterizations on queries unrelated to the document like MMLU~\cite{hendrycks2020measuring}.
When using a LoRA parameterization, we find that MMLU accuracy drops precipitously (from $54.7$ to $45.3$) as we increase the \artifact size (from 0.15 GB to 1.06 GB). In contrast, with prefix-tuning, the accuracy drops much less rapidly (from $54.7$ to $54.3$) as we increase the size (from 0.15 GB to 0.96 GB).
See \Cref{fig:parameterization} for plots illustrating these findings on \longhealth, \qasper, and \mtob.
We also show that freezing the attention sink (the first token in the key and value vectors) improves training stability (\Cref{fig:freeze}). 

\paragraph{\artifact Initialization} We compare three different strategies for initializing the KV cache when using the prefix-tuning parameterization: (1) random vectors (from a component-wise standard normal distribution), (2) key and value vectors of random tokens, and (3) key and value vectors of the first $p$ tokens of the corpus.
We find that initializing with key and value vectors of actual tokens (as opposed to random vectors) is critical for achieving ICL-level performance. 
On \longhealth, random vectors achieve an accuracy of $29.9\%$ while key and value vectors of random tokens achieve an accuracy of $51.3\%$.
Initializing with the first $p$ tokens provides an additional improvement of $4$ percentage points to $55.3\%$.
In the original prefix-tuning paper, the authors show that initializing from tokens improves performance when performing supervised fine-tuning on very small datasets~\cite{li2021prefix}. Our results extend this finding to \method, where we train on large synthetic datasets.

\paragraph{\method Seed Prompts} Next, we ablate the choice of \textit{seed prompts} (see Line 2 of \Cref{alg:synthetic-generation}). 
We compare two approaches: (1) always using the same seed prompt (\textit{``Please generate a single chat message to begin a conversation about the information in the corpus. Ask a question about the corpus or make a request."}) and (2) randomly sampling one of five different types  of seed prompts (\eg structuring, summarization; see full list in \Cref{app:method-data}). Note even with the latter approach, the seed prompts are generic: the same set of seed prompts are used for all corpora. 
On \mtob, we find that using this small set of seed prompts improves over the single seed prompt by 7.9 ChRF points ($24.1 \rightarrow 32.0$; see \Cref{fig:ablations} Left). 
On \longhealth, the improvement is $4.8$ accuracy points ($43.6 \rightarrow 48.4$ on \longhealth; see \Cref{fig:seeds}). 
Interestingly, on \qasper we do not see any significant benefit from using the diverse seed prompts. This is perhaps because, compared to \longhealth and \mtob, the queries in \qasper are less reasoning intensive.

\paragraph{\method Objective} Finally, we evaluate the importance of the context distillation objective (defined in \Cref{sec:method-objective}). 
Using the same \method synthetic data for both objectives, we compare the context-distillation objective with a simpler next-token prediction objective.
%
On \mtob, we find that using a context distillation objective on the synthetic conversation data improves ChRF by $8.6$ points ($24.9 \rightarrow 33.5$; see \Cref{fig:logits} Center).
We also see improvements on \longhealth and \qasper (see \Cref{fig:logits}).

\vspace{-2mm}
\subsection{Composing \artifacts}
\label{sec:results-composition}

\ifx\conference\neuripsconf

\fi
\ifx\conference\arxivconf

\fi

We evaluate if independently trained \artifacts can be \textit{composed} in order to serve queries about two different 
corpora (see \Cref{fig:composition}, Left). 
We train \artifacts across sizes $\{512, 1024, 2048, 4096\}$ and long 10-K documents from AMD, Pepsi, AMEX, and Boeing~\cite{islam2023financebench}. 
For each pair of \artifacts pairwise (6 pairs per cache size), we evaluate using a dataset of \textit{multi-document questions}, i.e., requiring information from both 10-Ks. 
Surprisingly, we find composition not only leads to coherent LLM generations \textit{off-the-shelf without any re-training} (\Cref{fig:composition}, Right), but also substantially outperforms the use of a single \artifact (\ie for only AMD) or ICL (which struggles due to context length limits) (\Cref{fig:composition}, Center) on the multi-document questions.

\vspace{-2mm}
\section{Discussion and conclusion}
We propose \artifacts as an alternative to ICL for settings where many different user messages reference the same large corpus of text.  
We demonstrate across a diverse set of language model workloads that, when trained via \method, they match ICL's response quality while substantially reducing memory consumption  ($38.6\times$ memory reduction across our evaluations) and increasing peak throughput ($26.4\times$ higher tokens per second). \artifacts are simple to train, composable, and compatible with existing LLM serving infrastructure.

However, compared with ICL, \method is not without limitations. 
Using \method to produce a KV-cache is much more costly than simply running standard ICL pre-fill. 
With our unoptimized implementation, training an ICL-quality \artifact takes $\sim30$ minutes on a single $8\times$H100 node (for \llamaeightb)
So our work does not provide a drop-in replacement for ICL, but rather demonstrates one way to tradeoff increased compute for reduced memory when constructing a KV-cache. 
This tradeoff is extremely advantageous in many settings: users often issue many queries over the same corpus and \method can be trained offline on idle or underutilized compute (\eg at night when user load is low~\cite{jaiswal2025serving,goel2025niyama}). 
Furthermore, there is ample room for optimizations (\eg improved shared-prefix attention kernels~\cite{dao2022flashattention,ye2025flashinfer, juravsky2024hydragenhighthroughputllminference}) that would make \method training procedure more efficient.

Looking forward, we envision \artifacts enabling a broad class of context-aware AI applications that are intractable with ICL today, from medical assistants that know a patient's full medical history to LLM-powered IDEs that understand entire codebases.

\paragraph{Acknowledgments} We thank Jordan Juravsky, Dan Biderman, Tri Dao, Bradley Brown, Mayee Chen, Avanika Narayan, Avner May, Bill Mark, Benjamin Spector, Roberto Garcia, Quinn Mcintyre, Yasa Baig, Geoff Angus, Kelly Buchanan, Mert Yuksekgonul, Eric Nguyen, Eric Wu, Kevin Wu, Owen Dugan, Jon Saad-Falcon, Simon Guo and the entire Zou, Hazy, and Scaling Intelligence research labs for helpful discussions and feedback.
We gratefully acknowledge Modal, Prime Intellect, Voltage Park, and Together AI for providing the GPUs to support for this work.
We gratefully acknowledge the support of NIH under No. U54EB020405 (Mobilize), NSF under Nos. CCF2247015 (Hardware-Aware), CCF1763315 (Beyond Sparsity), CCF1563078 (Volume to Velocity), and 1937301 (RTML); US DEVCOM ARL under Nos. W911NF-23-2-0184 (Long-context) and W911NF-21-2-0251 (Interactive Human-AI Teaming); ONR under Nos. N000142312633 (Deep Signal Processing); Stanford HAI under No. 247183; NXP, Xilinx, LETI-CEA, Intel, IBM, Microsoft, NEC, Toshiba, TSMC, ARM, Hitachi, BASF, Accenture, Ericsson, Qualcomm, Analog Devices, Google Cloud, Salesforce, Total, the HAI-GCP Cloud Credits for Research program,  the Stanford Data Science Initiative (SDSI), members of the Stanford SEAMS project: IBM and Felicis, as well as members of the Stanford DAWN project: Meta, Google, and VMWare. SE is supported by the NSF Graduate Research Fellowship Program. AR's research is supported by NSF grant CCF\#2247014. The U.S. Government is authorized to reproduce and distribute reprints for Governmental purposes notwithstanding any copyright notation thereon. Any opinions, findings, and conclusions or recommendations expressed in this material are those of the authors and do not necessarily reflect the views, policies, or endorsements, either expressed or implied, of NIH, ONR, or the U.S. Government. 

\paragraph{Contributions}
SE and RE conceived of \artifacts and \method. SE, RE, and SA designed the method, implemented the experiments, wrote the manuscript, and contributed equally to the project. NG made substantial contributions to the structure of the project and the final manuscript. EL and DZ implemented and ran experiments and made meaningful contributions to the manuscript. 
WT implemented the LoRA baselines. 
DZ and AR led the theoretical analysis. 
AR, JZ, AM, and CR supervised the project. 

\ifx\conference\neuripsconf
    \bibliographystyle{plain}
    
\else\ifx\conference\icmlconf
    \bibliographystyle{icml2025}
\else\ifx\conference\arxivconf
    \bibliographystyle{plain}
\fi
\fi
\bibliography{references}

\clearpage

\appendix
\begin{figure*}[htbp]  
    \centering
    \includegraphics[width=\textwidth]{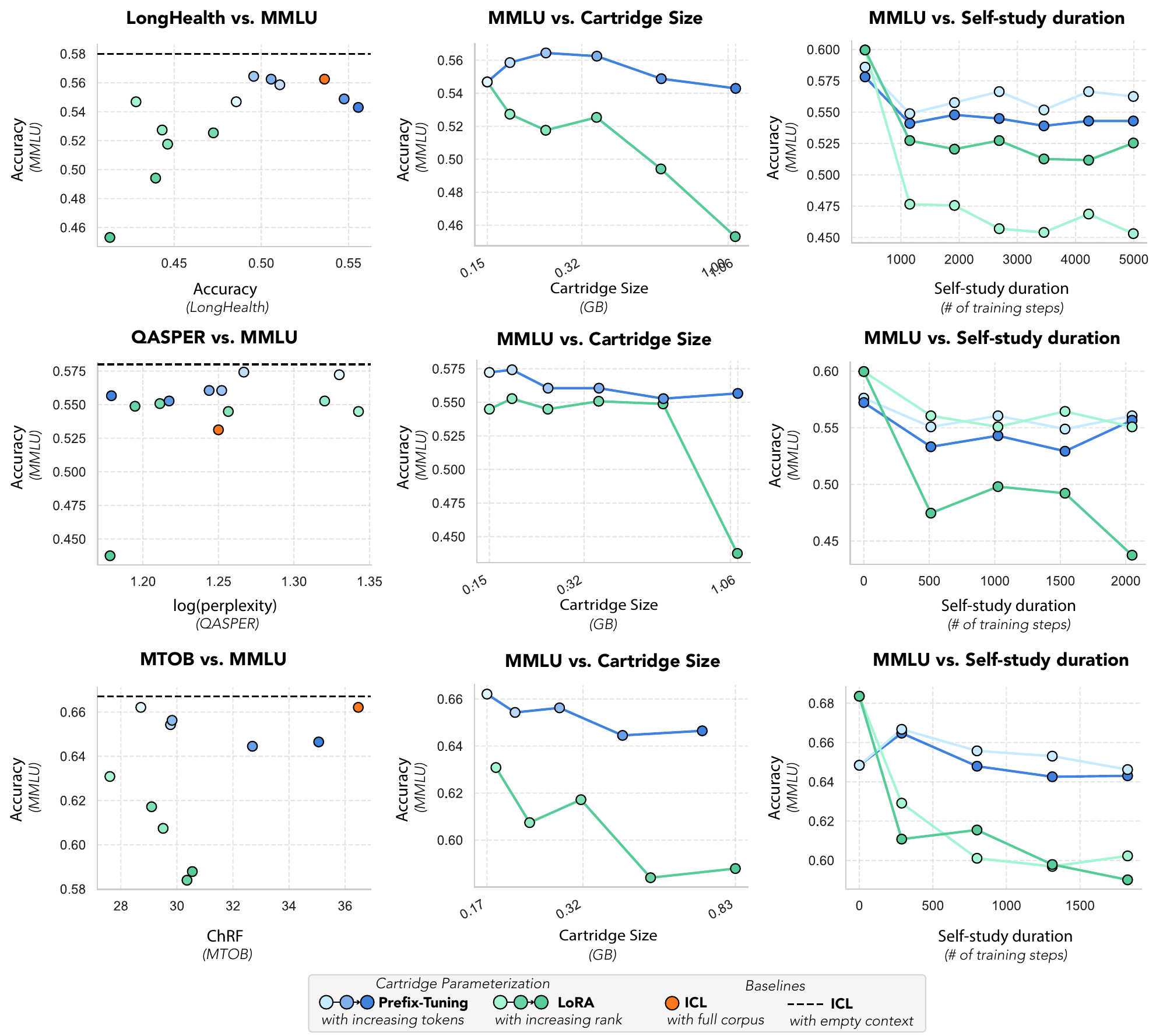}

    \caption{
        \textbf{Comparing \artifact parameterizations.} 
        We train \artifacts using \method on the corpora from \longhealth (Top), \qasper (Middle), and \mtob (Bottom) using two different parameterizations: simplified prefix-tuning (as described in \Cref{sec:artifact-parameterization}) and low-rank adaptation (LoRA)~\cite{hu2022lora}.
        We experiment with different \artifact sizes and choose LoRA rank and prefix-tuning cache size to align on memory consumption. 
        We evaluate the performance of the \artifacts on questions from the target dataset (\longhealth or \qasper) using the same protocol as in \Cref{fig:tradeoff-within} and also on questions from MMLU~\cite{hendrycks2020measuring} that are unrelated to the corpora. 
        (\textbf{Left}) The $x$-axis shows accuracy on MMLU and the $y$-axis shows accuracy on the target dataset. Each point represents a different \artifact size.
        (\textbf{Center}) The $x$-axis shows \artifact size in GB, and the $y$-axis shows accuracy on MMLU.
        (\textbf{Right}) The $x$-axis shows self-study duration in training steps, and the $y$-axis shows accuracy on MMLU. The shade of the points represents the size of the \artifact.
        }
    \label{fig:parameterization}
    \vspace{-2mm}
\end{figure*}

\section{Extended Results}
\label{app:results}

In this section, we ablate the main design choices of \artifacts and \method.

\subsection{\artifact design choices: parameterization and initialization}
\label{app:results-parameterization}
In our experiments, we parameterize the \artifact with a simplified version of prefix-tuning and initialize with a truncated KV-cache (see \Cref{sec:artifact-parameterization}). In this section, we describe ablation experiments motivating these design choices.
First, we compare two different \artifact parameterizations (\Cref{fig:parameterization}): simplified prefix-tuning~\cite{li2021prefix} and low-rank adaptation (LoRA)~\cite{hu2022lora}.
Then, we demonstrate the importance of proper \artifact   initialization (\Cref{fig:intialization}).

\paragraph{Parameterization}
We evaluate \artifacts trained on corpora from \longhealth or \qasper on both \textit{in-domain} (\ie questions from \longhealth or \qasper) and \textit{out-of-domain} (\ie questions from an unrelated benchmark, MMLU~\cite{hendrycks2020measuring}) queries.

We find that the prefix-tuning parameterization is more effective than a memory-matched LoRA parameterization on both in-domain and out-of-domain queries. 
This is illustrated in \Cref{fig:parameterization} (Left), where we see that prefix-tuning occupies the top-right corner of the plot (high accuracy on both MMLU and the target dataset).

Notably, we find that as we increase the \artifact size with LoRA tuning, performance on out-of-domain queries (MMLU) drops significantly. At 1.06 GB (LoRA rank 1632), MMLU accuracy drops from $60.0\%$ to $45.3\%$. 
This drop in performance is highly correlated with the size of the \artifact, suggesting that LoRA is not well-suited to large Cartridges, which we show in \Cref{fig:tradeoff-within} are important for recovering ICL performance.
In contrast, with prefix-tuning the accuracy only drops to $54.3\%$ at 1.06 GB. This degradation is mostly invariant to the size of the \artifact ($54.7\%$ at 0.15 GB), demonstrating that out-of-domain performance is robust across \artifact sizes.

On in-domain queries, prefix-tuning also outperforms LoRA, but the gap is smaller. 
Across all \artifact sizes, the best \longhealth accuracy prefix-tuning achieves is $55.6\%$ at $0.96$ GB, while the best LoRA accuracy is $47.25\%$ at $0.26$ GB. 
Interestingly, LoRA accuracy at the largest \artifact sizes is lower; $41.3\%$ at $0.96$. 
It is possible that this is due to the out-of-domain degradation of LoRA we discussed above. 
Since queries in \longhealth test set are quite different from the synthetic queries generated by \method (\eg they are multiple choice and require some complicated reasoning traces), out-of-domain robustness may be also important for ``in-domain'' performance. 

It isn't clear why prefix-tuning is so much more robust than LoRA to out-of-domain performance degradation. 
It is surprising given the similarity between a KV-cache and an MLP -- both are linear transformations separated by a non-linearity. 
It is possible that this is due to the difference in the activation function (SiLU vs. Softmax). 
We leave a more detailed investigation into the root cause of this difference for future work.

\paragraph{Initialization}
\begin{figure*}[ht]  
    \centering
    \includegraphics[width=0.8\textwidth]{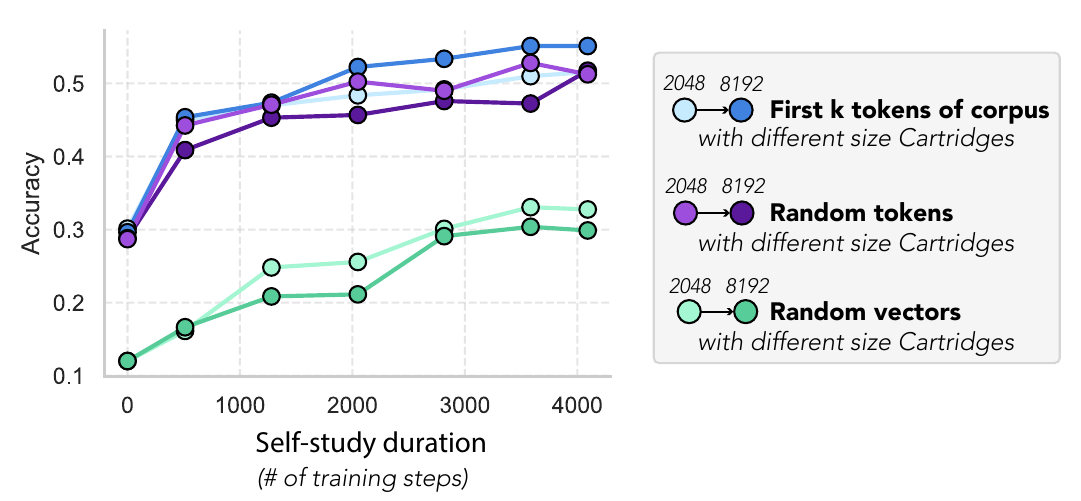}

    \caption{\textbf{Ablating \artifact initalization}. We train a \artifacts using \method on the corpora from \longhealth with 3 different initialization strategies. The $x$ axis is the number of training steps and the $y$ axis is the  accuracy on \longhealth. The blue lines are the results when initializing the \artifact using the KV cache from the first $k$ tokens of the document. The purple lines are initializing the \artifact from the KV cache of unrelated text. The green lines is initializing the \artifact with random vectors. Initializing from the first $k$ tokens leads to slightly stronger results than initializing from the KV cache of random text. This difference may be more prominent on other corpora where the first $k$ tokens are more relevant to solving the downstream task.}
    \label{fig:intialization}
\end{figure*}

The standard way of initializing a $k$ token \artifact in our main paper is using the KV cache from the first $k$ tokens of the source document. In \Cref{fig:intialization}, we ablate different initialization source. We try two additional initalizations: \textit{random vectors} and \textit{random tokens}.

For \textit{random vectors}, we simply initialize the parameters of the \artifact from a component-wise standard normal distribution. For \textit{random tokens}, we initialize the \artifact as the KV cache of the first $k$ tokens of arbitrary text (specifically, the \href{https://en.wikipedia.org/wiki/Gradient}{Wikipedia page for gradient}). The important difference between the these two strategies is that for \textit{random tokens} the initial \artifact is "valid" KV cache produced by the model, while for \textit{random vectors} it is not. 

\begin{figure*}[ht]  
    \centering
    \includegraphics[width=\textwidth]{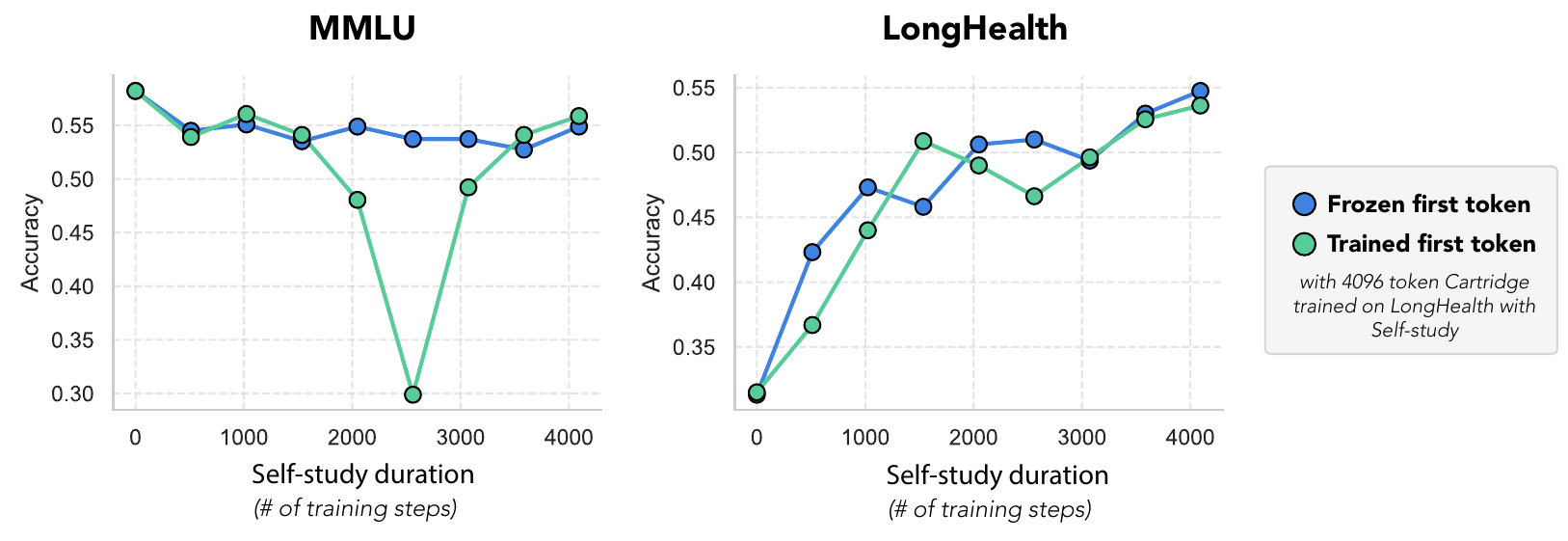}

    \caption{\textbf{Freezing the attention sink}. In both plots, the y-axis is accuracy and the x-axis is training step. The green line which corresponds to a run where we allow a trainable first token. (\textbf{Left}) The y-axis MMLU accuracy. This plot exemplifies the training instability we observed when the key and value vectors were trainable. The MMLU score dips to below 30\% before recovering. (\textbf{Left}) The y-axis is accuracy on questions from \longhealth.}
    \label{fig:freeze}
\end{figure*}
\paragraph{Freezing the attention sink}

A small yet important detail of training a \artifact is that we do not let the first token's key and value vectors to be trainable. As studied in \cite{xiao2024efficientstreaminglanguagemodels}, the first key vector, which corresponds to the beginning of sequence token and is thus the same for \textit{every sequence}, acts as an "attention sink".  We observed that when training a \artifact, allowing those key and value vectors to be trainable led to training instability (see \Cref{fig:freeze}). For example, on some runs the MMLU accuracy would dip to below $30\%$.

\subsection{\method design choices: data-generation and objective}
In \method training we use a seeded data-generation process and a context-distillation training objective (see \Cref{sec:method}). In this section, we ablate these design choices, comparing against the performance of \method with simpler data-generation and objectives.

\paragraph{Data Generation} In \Cref{sec:method-data}, we describe how we use five different seed prompt types when generating data with \Cref{alg:synthetic-generation}. These prompt types, \textit{structuring}, \textit{summarization}, \textit{question}, \textit{use cases}, and \textit{creative}, are described in more detail in \Cref{app:method-data-seed}.

In this section, we compare the performance of \method with these five prompt types against \method with a single prompt: \textit{``Please generate a single chat message to begin a conversation about the information in the corpus. Ask a question about the corpus or make a request."}

Across three datasets, we find that using the five different prompt types during \method leads to higher quality \artifacts (see \Cref{fig:logits}). 
On \mtob with \artifacts of size 1024 tokens, we see a $7.9$ point ChRF improvement ($24.1 \rightarrow 32.0$).
On \longhealth, the improvement is $5.5$ accuracy points ($45.8 \rightarrow 51.3$). 

Interestingly, on \qasper, we see no benefit from using the five different prompt types. 
It is possible this is because the queries in the \qasper dataset are mostly factual questions that do not require complex reasoning like \longhealth and \mtob do.

\begin{figure*}[h]  
    \centering
    \includegraphics[width=\textwidth]{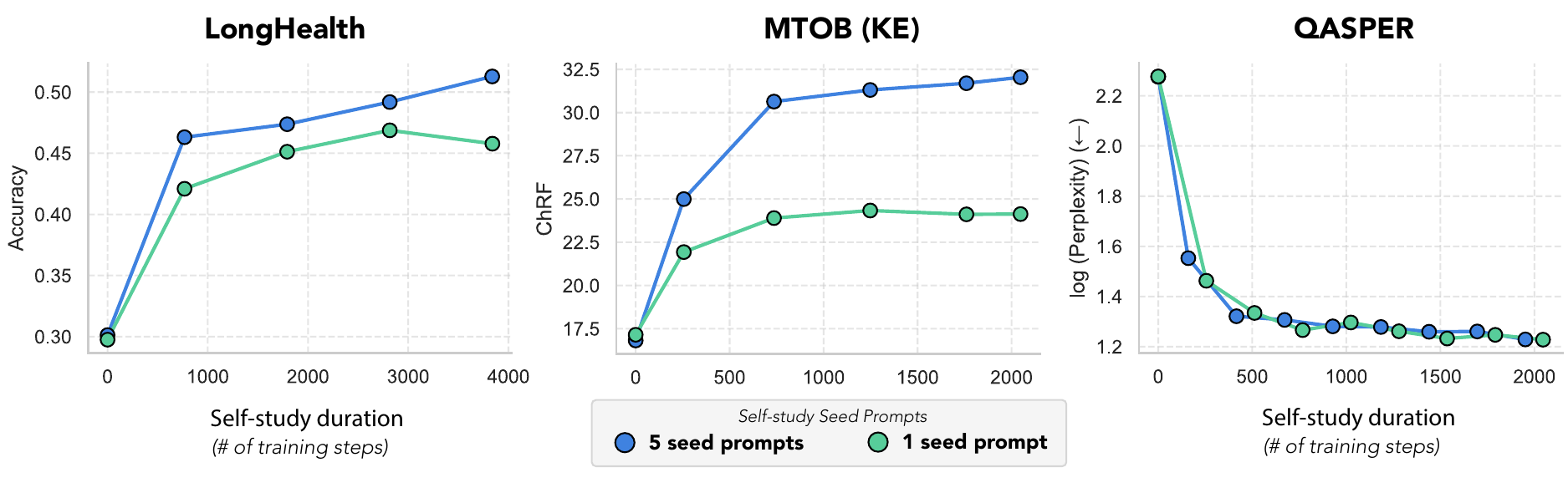}

    \caption{\textbf{Diverse seed prompts improve quality.}
    We generate synthetic data according to \Cref{alg:synthetic-generation} and ablate the choice of seed prompts sampled on Line 2.
    We consider two approaches: using a single, broad seed prompt (Green) or randomly sampling one of five different types of seed prompts (Blue).
    We train \artifacts using self-study with these two strategies on \longhealth, \mtob and \qasper corpora.
    In all plots, the $x$ axis is the number of training steps, and the $y$ axis is either accuracy (for \longhealth and \mtob) or perplexity on ground truth answer (for \qasper).
    We use an \artifact size of 1024 tokens.
         }
    \label{fig:seeds}
\end{figure*}

\paragraph{Training Objective} 
\label{app:obj}
In \Cref{sec:method}, we describe the context-distillation objective we use~\cite{snell2022learning,kim2016sequence,bhargava2024prompt}. 
This approach requires that we collect top output probabilities from the in-context model's output distribution during data generation.
A simpler alternative would be to just use a next-token prediction objective with a cross-entropy loss.

In our comparison, we find that this simpler objective underperforms the context-distillation objective (see \Cref{fig:logits}).
Most notably, on \mtob with 2048 token \artifacts, context-distillation outperforms next-token prediction by $8.3$ ChRF points ($24.9 \rightarrow 33.2$). 
On LongHealth, the gap is $3.7$ accuracy points ($47.6 \rightarrow 51.3$).
\begin{figure*}[ht]  
    \centering
    \includegraphics[width=\textwidth]{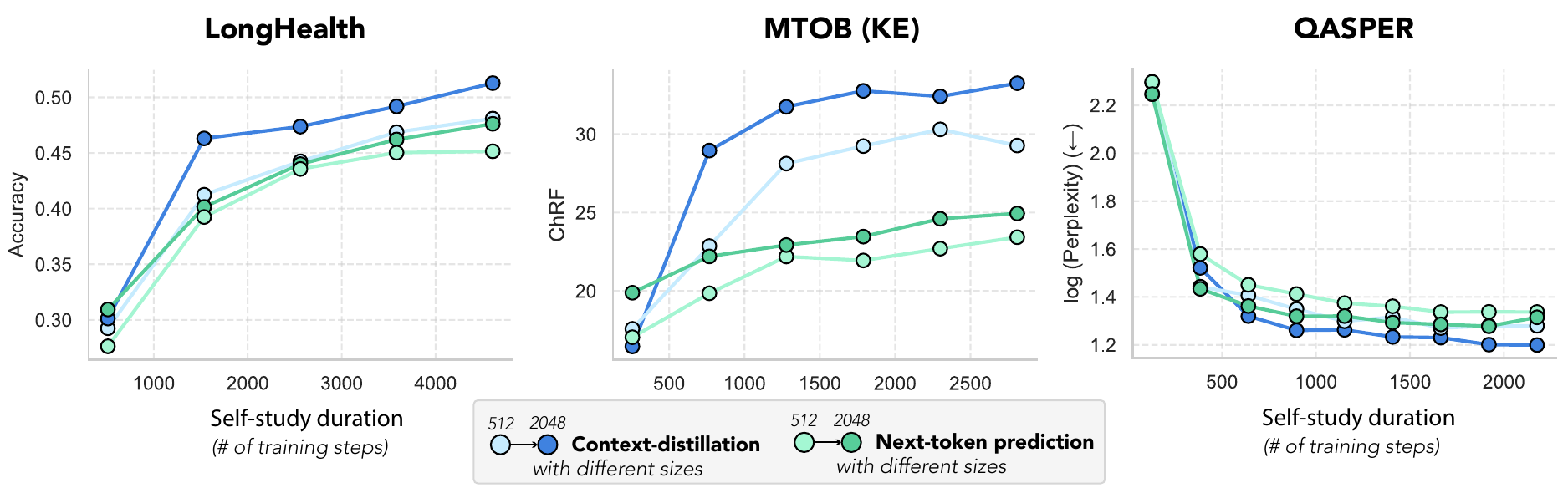}

    \caption{\textbf{Context-distillation objective improves training efficiency}. We train \artifacts using \method on the corpora from \longhealth (Left), \mtob (Center) and \qasper (Right) using two loss functions: a next token prediction loss (green) and a distillation loss (blue). 
        We evaluate the performance of the \artifacts on questions from the target dataset (\longhealth, \mtob or \qasper) using the same protocol as in \Cref{fig:scaling-indexing}. In all plots, the $x$ axis is the number of training steps, and the $y$ axis is either accuracy (for \longhealth and \mtob) or perplexity on ground truth answer (for \qasper).
         The shade of the points represents the size of the \artifact. Using a distillation loss achieves higher accuracy (or lower perplexity for \qasper) across datasets and \artifact sizes.}
    \label{fig:logits}
\end{figure*}

As shown in \Cref{fig:logits}, quality seems to be consistently improving with more \method compute.
It is possible, therefore, that by spending more during \method with the next-token prediction objective, we could close the gap. 
However, for a fixed amount of \method compute, context-distillation is considerably more effective.

These results demonstrate how context-distillation plays an important role in efficiently recovering ICL performance with \method.

\subsection{Throughput measurement details}

We provide details for the throughput measurements in Figure~\ref{fig:micros}. We use the state-of-the-art SGLang inference system, with default parameters \cite{zheng2024sglang}. We measure throughput on a single H100 GPU.

We first determine the largest batch size $b$ that fits in GPU memory, given a cache of size $k$ tokens. We then randomly initialize $b$ \artifacts of size $k$ and pre-load the \artifacts into GPU memory. We finally measure the time taken to decode $128$ tokens per sequence. The \artifacts and decoded tokens are appended to a KV-cache during generation. We report the average of $5$ iterations after using $3$ warm-up iterations.

\section{Extended Related Work}
\label{app:related-work}
In this section, we provide a more in-depth discussion of the place our work occupies in the broader literature.
The structure below mirrors the structure of our paper: first we discuss work related to the parameterization and initialization of \artifacts (\Cref{app:related-work-artifact}), then we cover work that inspired the design of \method (\Cref{app:related-work-method}), and finally we describe other approaches aimed at reducing the size of the KV-cache, many of which we compare against in our experiments (\Cref{app:related-work-reducekv}).

\subsection{Prior work related to the parameterization of \artifacts}
\label{app:related-work-artifact}

Below we discuss prior work from the parameter-efficient fine-tuning literature that inform the way we parameterize \artifacts in our work. 

\subsubsection{Parameter-efficient Fine-tuning (PEFT)}
\label{app:related-work-peft}
In order to adapt large language models (LLMs) to particular domains or tasks in a more compute and memory-efficient manner, several parameter-efficient fine-tuning (PEFT) methods have been developed. Some of the most widely used PEFT methods include Low-Rank Adaptation (LoRA) \cite{hu2022lora}, prefix-tuning \cite{li2021prefix}, and prompt-tuning \cite{lester2021power}. 

Leveraging prior observations that fine-tuned language models exhibit an intrinsic low rank structure, Hu \etal propose LoRA, which freezes model parameters and injects trainable rank decomposition matrices between each transformer layer. LoRA exhibits on-par or better fine-tuning quality while reducing the number of trainable parameters by 10,000 times and the GPU memory requirement by 3 times \cite{hu2022lora}. 

Li \etal and Lester \etal both take a different approach to lightweight fine-tuning, proposing tunable "prefixes" and "soft prompts" respectively to prepend to queries in order to steer the model to desired outputs. Li \etal proposes prefix-tuning, which learns a continuous representation for the activation of the prefix at each transformer layer. These learned activations are then prepended to activations obtained by passing the input prompt through the frozen transformer. In contrast, Lester \etal proposes prompt-tuning, which optimizes at the discrete token level and prepends a series of learnable tokens to the input prompt. Both methods show strong performance while greatly reducing the number of learnable parameters and improving compute and memory efficiency for language model adaptation. 

Principal Singular values and Singular vectors Adaptation (PiSSA) \cite{meng2024pissa} is another more recent PEFT method that attempts to ameliorate the slow convergence problems of LoRA. PiSSA initializes the LoRA rank decomposition matrices with the principal components of the original matrix, and exhibits faster convergence and enhanced performance compared to LoRA on several tasks, including GSM8K and MATH. 

Several of these methods, especially LoRA, have been adapted specifically for distilling knowledge provided in context into the parameters of a language model. Some of those methods are described in the sections below, and this work is an extension of prefix-tuning for long-context tasks.

\subsubsection{Parameter-efficient Adapter Composition and Merging}

A number of works have explored the idea of composing multiple different parameter-efficient adapters (\eg LoRAs) by summing them together, concatenating them, or using a dynamic mixture of experts~\cite{zhao2024merging,huang2023lorahub,xiao2024configurable,zhao2024loraretriever,yadav2024survey,wu2024mixture,gou2023mixture,li2024mixlora}.
For example, Huang \etal propose LoraHub, a framework for dynamically weighting and composing multiple language model adapters~\cite{huang2023lorahub}. Given a set of LoRA modules for different upstream tasks and new unseen task with in-context examples, LoraHub dynamically weights the LoRAs and composes a new LoRA module for the task.   
Similarly, Zhao \etal propose a method for dynamically \textit{retrieving} the most relevant language model LoRAs for a given task~\cite{zhao2024loraretriever}.

\subsubsection{Parametric Knowledge Injection}

Several recent works have explored methods for integrating external knowledge directly into model parameters, known as parametric knowledge injection ~\cite{kujanpaa2024knowledge,mao2025lift,su2025parametricrag,caccia2025training,kuratov2025cramming}. 
To the best of our knowledge, these studies are the closest in scope to ours.
Like ours, these works address the problem of parametric knowledge injection: how to store large text corpora within parameters of a language model.
Some use simple synthetic data generation pipelines or context-distillation objectives.
Unlike our work, these studies do not highlight the memory reduction and throughput advantages of parametric knowledge injection techniques. 
We highlight other differences below.

One parametric knowledge injection method, recently proposed by Kujanpaa \etal, is prompt distillation, in which a teacher model with access to privileged knowledge generates question-answer pairs. These pairs are then used to train a LoRA adapter for a student model (identical to the teacher model, but without access to privileged information) using a distillation objective (i.e. mimicking the teacher's full token distribution) ~\cite{kujanpaa2024knowledge}.
This closely resembles our context-distillation objective, which we also found works better than next-token prediction.
However, unlike our work, Kujanpaa \etal only train LoRA adapters of a single size (rank 1024) and don't assess memory reductions with respect to full in-context learning. 
Indeed, they do not evaluate against long-context ICL baselines at all, focusing instead on a comparison with RAG. 
Furthermore, they evaluate on a relatively simple long-context setting -- a concatenation of SQUAD passages~\cite{rajpurkar2016squad} -- which does not exhibit long range dependencies or require reasoning the way \mtob and \longhealth do.

Similarly, Mao \etal propose Long Input Fine-tuning (LIFT), which fine-tunes a language model using a typical next-token prediction objective on overlapping segments of the corpus, as well as instruction tuning on question answer pairs generated from the corpus. 
Unlike our work, Mao \etal find that synthetic Q/A pairs ``offer minimal benefit and can even degrade performance due to overfitting"~\cite{mao2025lift}. 
The difference in our findings is perhaps due to the fact that they only generate \textit{ten} synthetic examples, whereas we generate \textit{tens of thousands}. 
Furthermore, they use a weaker ICL baseline (Llama 3 8B) that only has 8k tokens of context. Any contexts longer than 8k tokens are truncated before being fed to the ICL baseline. 

Concurrent work on \textit{deep context distillation} performs knowledge injection with synthetic data and a context distillation objective~\cite{caccia2025training}. 
In this work, the authors only report performance with LoRA adapters and do not explore a prefix-tuning parameterization. 
In further contrast to our work, their focus is not on memory reductions or throughput improvements. They only report performance with a single adapter size (rank 16 LoRA adapters), and they do not report throughput improvements. Instead, the paper highlights the ``plug-and-play" nature of the method. 

Finally, Su \etal proposes Parametric Retrieval Augmented Generation (Parametric RAG), in which each document has a corresponding LoRA adapter, trained on an augmented dataset consisting of the document, rewritten versions of the document, and question-answer pairs generated from the document. At inference time, a retriever is used to determine relevants documents, and the corresponding LoRA adapters are merged \cite{su2025parametricrag}. This method demonstrates significant gains over RAG on a variety of tasks, including WikiMultihopQA.

\subsection{Prior work related to \method}
\label{app:related-work-method}

\subsubsection{Self Distillation and Context Distillation}
Self-distillation is another method used to internalize the performance gains provided by information in context (e.g. scratchpads, informative instructions) into the model parameters. In "Learning by Distilling Context", the authors distill a model with instructions and scratchpads in context into parameters  by conditioning the model on “[instructions] + [task-input]” to predict “[scratch-pad] + [final answer]”; then fine-tuning the same model to predict its own “[final answer]” conditioned on the “[task-input]”, without seeing the “[instructions]” or using the “[scratch-pad]” ~\cite{snell2024scaling}.

\subsubsection{Synthetic Data Generation}
Due to the ubiquitous need for high quality data for fine-tuning (e.g. for use with the methods described above), a large body of work has focused on generating high quality synthetic data ~\cite{nayak2024learning} \cite{abdin2024phi} \cite{gandhi2024datatune} \cite{riaz2025metasynth}. For example, Bonito is a model that is fine-tuned to generate synthetic data \cite{nayak2024learning}, and MetaSynth is a method proposed by Riaz \etal that uses a language model to orchestrate several expert LLMs for domain-specific synthetic data generation \cite{riaz2025metasynth}. The training process for Phi-4, a 14 billion parameter language model, also incorporates significant amounts of synthetically generated data \cite{abdin2024phi}. Incorporating synthetic data, in conjunction with new post-training techniques, allows Phi-4 to surpass its teacher model on STEM QA tasks, as well as perform well for its size on reasoning benchmarks. These works demonstrate the potential for synthetic data generation methods to augment the capabilities of language models. 

\subsection{Reducing the size of the KV cache}
\label{app:related-work-reducekv}
In this section, we discuss existing approaches for reducing the size of the KV cache.

First, in \Cref{app:related-work-reduce-arch}, we describe works that propose architectural changes to the multi-head attention operation, which reduce the memory footprint of the KV cache. Next, in \Cref{app:related-work-reducekv-prompt}, we discuss \textit{prompt compression} methods, which reduce the size of the KV cache by converting a long sequence of input embeddings into a shorter one. They can be split into hard-token methods, which output discrete tokens from the vocabulary, and soft-token methods, which output new token embeddings not from the vocabulary.
Finally, in \Cref{app:related-work-reducekv-kvcache}, we describe \textit{KV cache compression} methods. These methods directly modify the key and value matrices in the KV cache. Compared with prompt compression methods, these are more expressive because they can produce a KV cache that no sequence of input embeddings could have produced. 

The methodology proposed in our work relies on cache-tuning, which could be viewed as a form of KV cache compression.

\subsubsection{Prompt compression}
\label{app:related-work-reducekv-prompt}
\paragraph{Hard-token prompt compression}
Some works aim to reduce the size of KV cache by converting a longer text into a shorter text~\cite{jiang2023llmlingua,li2023unlocking,chuang2024learning,zhang2024adacomp,pan2024llmlingua}. These methods are typically referred to as \textit{hard-token} prompt compression methods because the resulting KV cache comes from discrete tokens from the vocabulary. Compared with soft-token prompt methods, these methods work well with black-box API models. 

These methods can be broadly classified into two categories: filtering and summarization based methods. 
Filtering methods cut text from the original prompt using heuristics such as self-information. For example, LLMLingua and Selective-Context use a smaller LLM to filter a long prompt (\textit{e.g.} dropping redundant tokens) before passing it to the main model~\cite{jiang2023llmlingua,li2023unlocking}.
Summarization methods paraphrase a long prompt into a smaller number of tokens~\cite{chuang2024learning}.

\paragraph{Soft-token prompt compression with adapted LLMs}
In one line of work, researchers train a model (typically an adapted LLM) to compress a long prompt into a smaller number of soft tokens~\cite{chevalier2023adapting,yen2024long,ge2023context,mu2023learning,qin2023dodo}.

For example, \textit{Autocompressors} and \textit{In-context Autoencoders} (ICAE) are LLMs that are fine-tuned to output embeddings which can be used in soft-token prompts~\cite{chevalier2023adapting,ge2023context}. Autocompressors are trained with full-parameter fine-tuning and leverage a recursive strategy to generate the soft prompts, whereas ICAEs are trained with LoRA and use a single forward pass to generate the soft prompts. 
A recent method, LLoCO, train domain-specific LoRA adapters that enable the decoder better leverage AutoCompressor embeddings~\cite{tan2024lloco}. This differs from \artifacts in that the LLoCO LoRA adapters are trained for a domain (\eg academic papers, news), not a specific document.
A number of other works also propose using an auxiliary model to produce soft-tokens from a long prompt~\cite{ge2023context,qin2023dodo}.
\textit{Gisting} is another method that differs from those above in that it uses the same LLM to compress the prompt into soft tokens as it uses to generate the response~\cite{mu2023learning}.

\paragraph{Soft-token prompt compression via gradient-descent}
Soft tokens can also be produced by optimizing input token embeddings with gradient descent. 
This idea, called \textit{prompt tuning}, was first proposed for the purpose of conditioning a frozen langauge model to perform specific tasks~\cite{lester2021power}. 
As such, it is an important part of the parameter-efficient fine-tuning literature and is discussed in more detail in \Cref{app:related-work-peft}. 
Since then, Li \etal has extended prefix tuning techniques to long-context settings, proposing a new method called prefix propagation, which conditions prefixes on previous hidden states to achieve superior performance on long-document tasks compared to prefix tuning~\cite{li2024mixlora}. 

\subsubsection{KV cache compression} 
\label{app:related-work-reducekv-kvcache}

\paragraph{Hard-token KV cache compression} Motivated by the observation that, in some settings, a small number of keys dominate the attention scores of subsequent queries, several works have proposed \textit{KV cache eviction policies} wherein keys and values are dynamically dropped during generation~\cite{ge2023model,zhang2023h2o,tang2024quest,oren2024transformers}. For example, H20 drops keys and values from \textit{generated tokens} based on a running sum of historical attention scores~\cite{zhang2023h2o}. Similarly, SnapKV drops keys and values from \textit{prompt tokens} based on a window of queries from the end of the prompt~\cite{li2024snapkv}.

A major limitation of eviction methods is that once a key is evicted, it cannot be recovered. 
Instead of evicting keys permanently, another line of work focuses on selectively loading keys from KV cache to SMs. 
While these works do not reduce memory consumption of the KV cache, they can speed up inference by making better use of GPU memory bandwidth~\cite{ribar2023sparq,tang2024quest}. For example, the Quest method estimates critical tokens at each decoding step and selectively loads them to SMs~\cite{tang2024quest}.

Compared with the hard-token \textit{prompt compression} methods, KV-cache compression methods allow fine-grained control at the level of an attention head. This means that a token can be dropped from one attention head but not another.

\paragraph{Soft-token KV cache compression with merging} In another line of work, instead of evicting tokens from the KV cache, researchers propose merging similar tokens ~\cite{wang2024model,zhang2024cam,wan2024d2o, liu2024minicache}. For example, Cache Merge (CaM) takes keys marked for eviction and merges them instead, using a weighting scheme based on attention weights \cite{zhang2024cam}. Wang \etal builds on this work by clustering key states into "merge sets" based on cosine similarity, and merging states within a "merge set" with a Gaussian kernel weighting scheme, which upweights states more similar to a pivotal state chosen as the token with the largest total attention score \cite{wang2024model}. Wan \etal expands on both these works with Dynamic Discriminative Operations (D2O), which performs optimizations at both the layer and token levels. D2O adjusts the KV cache budget for each layer based on its attention density and uses an exponential moving average mechanism to dynamically determine when a previously discarded token is similar enough to retained tokens to be merged back in \cite{wan2024d2o}. All of these works demonstrate promising results, offering similar or better performance on several tasks compared to a full cache with a 50\% or more reduction in cache size. However, there is still room for further improvement, as these methods still fail to match full cache performance in several tasks, and even a 50\% reduction in cache size may still be prohibitively expensive for very large models or very long contexts. Additionally, these works do not evaluate the effectiveness of these methods in long-context settings. 

\paragraph{Soft-token KV cache compression with low-rank projection} A number of works leverage the observation that the KV cache exhibits low-rank structure to develop compression methods ~\cite{yu2024effectively,chang2024palu, zhang2024lorc, zhou2025elitekv, saxena2024eigenattn}. Similar to compression methods based on merging, compression methods based on low-rank adaptation achieve performances similar to or exceeding full caches on several tasks at 50\% compression, while experiencing performance degradation upon further compression.

\paragraph{Soft-token KV cache compression with adapted LLMs}
Above we discussed how some works adapt an LLM to output a shorter sequence of soft tokens given a long context. Similarly, one could adapt an LLM to output a smaller KV cache given a long context. While less explored than the analagous prompt compression approach, there is at least one published method that falls into this category. 
In \textit{KV-distill}, the authors add LoRA adapters to an LLM's query projections and train them to to produce queries which aggregate information from prior tokens~\cite{chari2025kv}. 
The adapter is applied selectively to some tokens and only these tokens are kept in the KV cache.
The idea is that these selected tokens can act as sinks to collect information from prior tokens. 
The adapter is trained with a distillation objective between a compressed and uncompressed KV cache. However, unlike our work, KV-distill does not use any training at test time.

\paragraph{Soft-token KV cache compression with gradient-descent} The idea of treating the keys and value matrices in a KV cache as weights and training them with gradient descent was first discussed in the prefix-tuning paper~\cite{li2021prefix}.
In this work, the method was not applied to long-contexts, but rather as a parameter-efficient fine-tuning method that can be applied to training datasets with input-output pairs, so we discuss it in more detail in \ref{app:related-work-peft}. 
Since then, we are not aware of works that have applied this technique to handle long-contexts. 

\subsubsection{Architectural changes}
\label{app:related-work-reduce-arch}
A number of works have proposed architectural changes to the original multi-head attention (MHA) operation~\cite{vaswani2017attention} that reduce the memory footprint of the KV cache. 
Because they fundamentally alter the architecture, these methods are not immediately compatible with pre-trained models using the standard MHA operation. 

The earliest works in this direction developed fixed sparsity patterns in the attention map~\cite{beltagy2020longformer,child2019generating,zaheer2020big}. For example, many works use a sliding window sparsity pattern wherein each token attends to a fixed window of tokens around it. These approaches reduce the size of the KV cache because they require only keeping around a fixed number of tokens in the KV cache. 
More recently, some large language models have adopted sliding window sparsity in a subset of layers/heads~\cite{team2024gemma}.

While the methods above reduce the size of the cache by introducing sparsity at the token-level, another class of methods changes the structure of the attention heads. Multi-query attention (MQA), the earliest of such modifications, uses multiple query heads but only a single key and value head~\cite{shazeer2019fast}.
While MQA dramatically reduces the size of the KV cache, it can lead to a significant drop in the expressive power of the model. Grouped-query attention (GQA) is a middle ground between MQA and MHA that allows a group of query heads to attend to a single key and value head~\cite{ainslie2023gqa}. Many frontier models use GQA, including the Llama 3 architecture, which we use in our experiments~\cite{dubey2024llama3,jiang2024identifying,yang2024qwen2}.
More recently, a number of other architectural modifications have been proposed including including Multi-head Latent Attention~\cite{liu2024deepseek} and Tensor Product Attention~\cite{zhang2025tensor}.

In another line of work, researchers observe that without the softmax operation in the attention mechanism (\textit{i.e.} linearizing the attention operator), the KV cache can be faithfully represented by the fixed size matrix $K^\top V$~\cite{arora2024simple}. This allows us to represent the KV cache with a single matrix whose size is independent of the context length.

Indeed, a large body of work has focused on developing architectures with fixed-size memory consumption (\textit{i.e.} models that do away with the KV cache). Notable examples include state-space models~\cite{gu2023mamba}, RNNs~\cite{beck2024xlstm}, and other linear attention variants~\cite{arora2024simple,yang2024parallelizing}.

Prior work shows that there are tradeoffs between the memory consumption of an architecture and the ability of a model to perform recall-intensive tasks, when controlling for compute (\ie FLOPs)~\cite{arora2024simple}. 
In this context, our work shows that by increasing compute (\ie FLOPs), we can reduce the memory consumption of a model without sacrificing performance. 
In \Cref{app:theory}, we provide a prelinary theoretical analysis relating \method with recurrent architectures. However, future work should explore the relationship between \artifacts and recurrent models in more depth.

Most related to our work are recent architectures (\eg Titans~\cite{behrouz2024titans}, TTT~\cite{sun2024learning}) that use a constant-sized memory object (like in linear attention) but apply gradient descent-like memory updates~\cite{sun2024learning,yang2025parallelizinglineartransformersdelta,behrouz2025atlas,behrouz2024titans,behrouz2025s}.
Like our work, these architectures are motivated by the observation that gradient descent is very effective at compressing text into constant space and demonstrate the promise of using gradient descent at test time for long-context tasks. 
In contrast with our work, these architectures need to be trained from scratch, they have not been validated on large scale models, and do not match the quality of attention on recall-intensive tasks~\cite{arora2024simple,behrouz2025atlas}.

\subsubsection{Orchestration for long-context}
In this section, we describe strategies for managing long-contexts by orchestrating calls to LLMs.
For instance, the approach by \citep{russak2024writing} involves summarizing chunks of the context and then combining the summaries. Similarly, PRISM \citep{jayalath2024long} treats the context as a sequence of chunks, capturing key information in a structured data format. MemGPT \citep{packer2023memgpt} introduces a virtual memory paging system, drawing inspiration from operating systems. As context length reaches the limit of available memory, the system strategically determines which information to retain.

\subsubsection{Synthetic data generation}
A large body of work has focused on generating synthetic training data ~\cite{nayak2024learning, abdin2024phi,gandhi2024datatune,riaz2025metasynth}. For example, Bonito is a model that is fine-tuned to generate synthetic data \cite{nayak2024learning}, and MetaSynth is a method proposed by Riaz \etal that uses a language model to orchestrate several expert LLMs for domain-specific synthetic data generation \cite{riaz2025metasynth}. The training process for Phi-4, a 14 billion parameter language model, also incorporates significant amounts of synthetically generated data \cite{abdin2024phi}.

\section{Extended method description}
\label{app:method}

\label{app:method-data}
In this section, we detail the seed prompts and chunking strategy we used to train \artifacts with \method.

\subsection{\method seed prompts}
\label{app:method-data-seed}

As discussed in \Cref{alg:synthetic-generation}, we seed the synthetic conversation generation with a prompt that elicits conversations about different aspects of the document. 
For each conversation, we randomly sample one of the following functions and create a seed prompt by calling it:

\begin{exampleboxcode}[Structuring Seed Prompt Generator]
    \tiny
\begin{lstlisting}[language=Python]
def structuring_seed_prompt(**kwargs):
    DATA_FORMATS = [
        "JSON",
        "YAML",
        "TOML",
        "INI",
        "XML",
        "plain text",
    ]

    data_format = random.choice(DATA_FORMATS)

    EXAMPLES = [
        (
            "Can you structure the information in {{subsection}} of {{document}} related to {{something specific}} "
            f"in the following format: {data_format}? "
            "Be sure to include precise information like any dates, times, names, and numerical values.'"
        ...

    ]

    example = random.choice(EXAMPLES)

    return (
        f"Please generate a single chat message instructing an LLM to structure the information in {data_format}. "
        "Output only the chat message itself and absolutely nothing else. "
        "Make sure it is clear what section and document you are asking about. "
        f"The message can follow the following template, filling in details from the corpus: \n\n'{example}'"
    )

    \end{lstlisting}
\end{exampleboxcode}

\begin{exampleboxcode}[Summarization Seed Prompt Generator]
\begin{lstlisting}[language=Python]
def summarization_seed_prompt(**kwargs):
    prompts = [
        (
            "Please generate a single chat message instructing an LLM to summarize part of the corpus. "
            "Make sure the instruction is very explicit about the section of the corpus that you want to summarize. "
            "Include details (ids, names, titles, dates, etc.) that make it clear what you are asking about. "
        ),
        (
            "Please generate a single chat message instructing an LLM to summarize a section. "
            "Make sure the instruction is explicit about the section that should be summarized and the document it is from."
        ),
    ]
    prompt = random.choice(prompts)
    return prompt

    \end{lstlisting}
\end{exampleboxcode}

\begin{exampleboxcode}[Question Seed Prompt Generator]
\begin{lstlisting}[language=Python]
def question_seed_prompt(**kwargs):
    prompts = [
        (
            "Generate a question for an LLM that will test its knowledge of the information in the corpus above. "
            "In your question be sure to include details (ids, names, titles, dates, etc.) that make it clear what you are asking about. "
            "Output only a single question. Do NOT include any other text or explanation other than the question."
        ),
        (
            "Generate a message for an LLM that will test its knowledge of the information in the corpus above."
            "Be sure to include details (ids, names, titles, dates, etc.) in the question so that it can be answered without access to the corpus (i.e. closed-book setting). "
            "Output only a single question. Do NOT include any other text or explanation other than the question."
        ),
        (
            "You are helping to quiz a user about the information in the corpus. "
            "Please generate a question about the subsection of the corpus above. "
            "Be sure to include details (ids, names, titles, dates, etc.) in the question to make it clear what you are asking about. "
            "Answer only with the question, do not include any other text."
        ),
    ]
    prompt = random.choice(prompts)
    return prompt
    \end{lstlisting}
\end{exampleboxcode}

\begin{exampleboxcode}[Use Case Seed Prompt Generator]
\begin{lstlisting}[language=Python]
def use_case_seed_prompt(**kwargs):
    prompt = (
        "You are working to train a language model on the information in the following corpus. "
        "Your primary goal is to think about practical, real-world tasks or applications that someone could achieve using the knowledge contained within this corpus. "
        "Consider how a user might want to apply this information, not just recall it. "
        "After considering potential use cases, your task will be to generate a sample question that reflects one of these downstream applications. "
        "This question/instruction/task should be something a user, who has access to this corpus, might ask when trying to accomplish their specific goal. "
        "Output only a single question. Do NOT include any other text or explanation other than the question."
    )
    return prompt

    \end{lstlisting}
\end{exampleboxcode}

\begin{exampleboxcode}[Creative Seed Prompt Generator]
\begin{lstlisting}[language=Python]
def creative_seed_prompt(**kwargs):
    prompt = [
        (
            "You are having a creative conversation inspired by the information in the corpus. "
            "Please generate a question for your conversation partner to start off the discussion. "
            "Answer only with the question, do not include any other text."
        ),
    ]
    return random.choice(prompt)
\end{lstlisting}
\end{exampleboxcode}

\subsection{\method chunking}

For the \method data generation process, we extract uniformly random token-level chunks from the input corpus $\mathcal{C}$. A corresponding textual description is generally prepended to each chunk $\tilde{c}$ to contextualize it when generating the seed prompt. This approach helps the model focus on different parts of the corpus and generate diverse synthetic examples. The specific chunking parameters and descriptions are tailored to each dataset:

\begin{itemize}[leftmargin=*]
    \item \textbf{\longhealth:} Chunks are sampled with a minimum size of 512 tokens and a maximum size of 4096 tokens. The accompanying description is: \textit{`Below is a section of a patient's medical record. It is part of a larger corpus of medical records for $N_\text{patients}$ different patients.'}
    \item \textbf{AMD/FinanceBench:} Fixed-size chunks of 8192 tokens are utilized. No specific descriptive text is prepended to these chunks.
    \item \textbf{\mtob:} Chunks are sampled with a minimum size of 512 tokens and a maximum size of 4096 tokens. The description used is: \textit{`The following is an excerpt from a grammar book about the Kalamang language.'}
    \item \textbf{\qasper:} Following our general methodology, chunks are sampled with a minimum size of 512 tokens and a maximum size of 4096 tokens. A generic description is used to contextualize the chunk as an excerpt from a research paper, in line with the nature of the Qasper dataset.
\end{itemize}

\label{app:method-data-chunk}

\section{Datasets}
\label{app:datasets}

\subsection{ \genconvo }
To evaluate the ability of our approach to handle diverse queries over long documents, we generated the \genconvo dataset. We created \genconvo using the AMD 2022 10-K filing, a document from the FinanceBench corpus \cite{islam2023financebench}. The primary purpose of \genconvo is to simulate a wide range of tasks a user might ask a model to perform given a long document, thereby testing the model's comprehension, reasoning, and ability to extract varied types of information. The generation process relies on Claude Sonnet 3.7 \cite{anthropic2024claude} and is structured as follows:

\begin{enumerate}[leftmargin=*]
    \item \textbf{Document Input:} The entire source document (e.g., the AMD 2022 10-K, which is less than 200,000 tokens and fits within the model's context window) is provided to Claude Sonnet 3.7.
    \item \textbf{Question Generation:} A series of distinct prompt templates (detailed below), designed to elicit different reasoning traces (e.g., factual recall, synthesis, multi-hop reasoning), are used to generate questions. For the given document and each prompt template, we ask the model to generate 16 unique questions. This involves providing the model with the full document content alongside the specific question-generation prompt.
    \item \textbf{Answer Generation:} Subsequently, for each generated question, Claude Sonnet 3.7 is prompted again with the original full document and the generated question to produce an answer. This process ensures that the answers are grounded in the provided document.
\end{enumerate}

We hope \genconvo provides a challenging benchmark that moves beyond simple fact retrieval, assessing a model's capacity for deeper understanding and more complex information processing over long contexts. The following prompt templates were utilized for the question generation phase:

\begin{examplebox}[Factual Prompt Template]
    \small
    \ttfamily
Please generate a question to test someone's
ability to remember factual details from the document. The answer should be a few
tokens long and be a factual detail from the statement, such as a number, entity,
date, title, or name.

This question should not be common knowledge: instead, it should be something
that is only answerable via information in the document.

\end{examplebox}

\begin{examplebox}[Knowledge Prompt Template]
    \small
    \ttfamily
Please generate a question that requires
combining information mentioned both inside and outside the document. 

This question should require using a fact from the document and also a fact that
you are confident about, but is not mentioned in the document. For instance: 
- What are the founding dates of the companies that got acquired this year?
  This is a good question because the names of the acquired companies are
  mentioned in the document and the founding dates are not mentioned.
- What is the name of the CEO's spouse?  This is a good question because the
  name of the CEO is mentioned in the document and the spouse's name is not
  mentioned. 

The answer should be a fact that is a few tokens long such as a number, entity,
date, title, or name.
\end{examplebox}

\begin{examplebox}[Disjoint Prompt Template]
    \small
    \ttfamily
Please generate a multi-hop question that
tests someone's ability to use factual information mentioned in at least two
very different sub-sections of the document. 

This question shouldn't be a standard question about this kind of document.
Instead, it should ask about two particularly disconnected ideas, like
comparing information about the amount of owned space for the company
headquarters with the amount of dollars of estimated liability or comparing
the revenue number with the number of employees.

This question should also test one's ability to do retrieval: do not give
away part of the answer in the question. Ensure that for one to get the
correct answer to the question, they need to understand the document.

The answer should be a short: for example, a number, entity, date, title,
or name.
\end{examplebox}

\begin{examplebox}[Synthesize Prompt Template]
    \small
    \ttfamily
Please generate a question that requires
synthesizing and aggregating information in the document. 

For instance, you could ask someone to summarize a page of the document, list
all the key competitors mentioned in the document, or summarize the company's
business model.
\end{examplebox}

\begin{examplebox}[Structure Prompt Template]
    \small
    \ttfamily
Please generate a question that requires
understanding the structure of the document. 

This question should be more about the structure of the document, rather than
the precise statement details. For instance, you could ask someone to list the
titles of all the sections in the document, describe the document structure,
report the total number of pages, ask which section amongst two sections comes
first, or report the section with the largest number of tables.

\end{examplebox}

\begin{examplebox}[Creative Prompt Template]
    \small
    \ttfamily
Please generate a question about the
document to test someone's ability to comprehend the content of the document.
This question specifically should be focused on their ability to generalize the
information about the document to a strange question of sorts.

This question shouldn't be a standard question about this kind of document,
it should ask to do something abnormal and creative, like writing a poem
about a financial document.
\end{examplebox}

\begin{examplebox}[Counting Prompt Template]
    \small
    \ttfamily
Please generate a question that requires
counting how frequently different events occur in the document.

This question should be about statistical properties of the document, rather
than the statement details. For instance, you could ask someone to count the
number of times the word "million" is mentioned or count the length of the
shortest section title.

The answer should be a number.
\end{examplebox}

\begin{examplebox}[Reasoning Prompt Template]   
    \small
    \ttfamily
Please generate a question that requires
mathematical reasoning over the values in the document. 

This question should require going beyond the facts directly mentioned in the
statement, such as asking to compute the percentage increase in revenue between
two years, find the largest expense category, or calculate difference in profit
between two years. 

The answer should be a number.
\end{examplebox}

\subsection{\longhealth}
\longhealth is a benchmark for evaluating large language models ability to analyze and interpret long clinical texts~\cite{adams2024longhealth}. The benchmark consists of 20 fictional clinical case reports (each containing between 5,090 and 6,754 word) and 400 multiple-choice questions based on them. 

In our experiments, the context $\ctx$ consists of the reports for a \textit{panel} of $n$ patients. 
We use $n=10$ patients, with a full panel of approximately 100k tokens, which fits in the context length of the \llamathree models.

The questions are categorized into information extraction, negation, and sorting. 

A \textbf{sorting} question is included below:
\begin{examplebox}
    \small
    \ttfamily
    Please answer the question below about the following patient: ID patient\_03, Name: Mr. John Williams, Birthday: 1956-08-08 00:00:00, Diagnosis: Multiple Myeloma
    
    <question>\\
    Mr. Williams received multiple radiologic examinations. In which order did she receive them?\\
    </question>
    
    <options>\\
    CT Whole Body > MR Spine Scan > CT Spine Scan > PSMA-PET-CT Scan > CT Chest > CT Whole Body > Whole Body CT scan\\
    Whole Body CT scan > CT Spine Scan > CT Whole Body > MR Spine Scan > CT Chest > PSMA-PET-CT Scan > CT Whole Body.\\
    CT Whole Body > CT Whole Body > CT Chest > CT Chest > PSMA-PET-CT Scan > MR Spine Scan > CT Spine Scan > Whole Body CT scan > Chest X-ray\\
    CT Chest > CT Spine Scan > CT Whole Body > Whole Body CT scan > PSMA-PET-CT Scan > MR Spine Scan > CT Whole Body\\
    Whole Body CT scan > CT Spine Scan > CT Whole Body > MR Spine Scan > CT Chest  > CT Whole Body  > PSMA-PET-CT Scan\\
    </options>
    
    You should first think step by step. Then give your final answer exactly as it appears in the options. Your output should be in the following format:\\
    <thinking> \{\{YOUR\_THOUGHT\_PROCESS\}\} </thinking>\\
    
    <answer>\\
    \{YOUR\_ANSWER\}\\
    </answer>
\end{examplebox}

An example of a \textbf{negation} question is included below:
\begin{examplebox}
    \ttfamily

    Please answer the question below about the following patient: ID patient\_01, Name: Anna Sample, Birthday: 1970-01-01 00:00:00, Diagnosis: DLBCL
    
    <question>\\
    Which of these examinations were never performed in Mrs. Sample?\\
    </question>
    
    <options>\\
    Bone marrow aspiration\\
    CSF aspiration\\
    MRI of the head\\
    Pulmonary function testing\
    Cardiac stress testing\\
    </options>
    
    You should first think step by step. Then give your final answer exactly as it appears in the options. Your output should be in the following format:\\
    <thinking> \{\{YOUR\_THOUGHT\_PROCESS\}\} </thinking>\\
    
    <answer>\\
    \{YOUR\_ANSWER\}\\
    </answer>
\end{examplebox}

\subsection{\mtob}
The Machine Translation from One Book (MTOB) benchmark tests a large language model's ability to learn to translate between English and Kalamang, a low-resource language with virtually no web presence \cite{tanzer2023benchmark}. The core task is to perform translation (Kalamang to English, and English to Kalamang) by primarily relying on a single comprehensive grammar book and a small set of accompanying linguistic resources. In our work, we focus on translating from Kalamang to English.

The source documents provided by the MTOB benchmark are:
\begin{itemize}[leftmargin=*]
    \item \textbf{A grammar of Kalamang}: A comprehensive grammar textbook, with the original source provided in \LaTeX{} format. This book details the phonology, morphology, and syntax of Kalamang.
    \item \textbf{Bilingual Word List (W)}: A list of Kalamang words with their part-of-speech tags and English descriptions.
    \item \textbf{Parallel Kalamang-English Corpus (S)}: A collection of 375 paired Kalamang-English sentences.
\end{itemize}

The MTOB authors preprocessed the grammar textbook from its original \LaTeX{} source into several plaintext splits for their baseline experiments. These include:
\begin{itemize}[leftmargin=*]
    \item \textbf{$G^m$ (Medium-length chunk)}: A plaintext segment of approximately 50k tokens consisting of an overview chapter, a morpheme table from the grammar book, and the complete bilingual word list (W).
    \item \textbf{$G^l$ (Long-length chunk)}: A larger plaintext segment of approximately 100k tokens, containing chapters from the grammar book that the MTOB authors deemed most important for the translation task.
    \item \textbf{Full Plaintext Textbook (G)}: The entire grammar book converted to plaintext.
\end{itemize}

The combination of the long-length chunk ($G^l$), the parallel sentences (S), and the word list (W) exceeds the context window of Llama 3 models. We use the medium-length chunk $G^m$ and the parallel sentence list $S$ as input for our ICL baseline.

\subsection{\qasper}
\qasper is a benchmark for evaluating the ability of large language models to answer questions about scientific papers~\cite{dasigi2021dataset}. 
To create a challenging multi-query long-context setting resembling the setup described in \Cref{sec:problem-setup}, we concatenate 16 papers all related to \textit{QA NLP models} to form out corpus $\ctx$. 
In total, there are 78 questions about these 16 papers in the dataset, which we use as the queries $\queries$.

Because the dataset only includes short answers and ground-truth spans containing evidence for each answer, we rewrite the answers in a longer, more conversational format using GPT-4.1 and use these as the targets when evaluating.

\section{Theoretical analysis: Relationship between attention, linear attention, and \artifacts}
\label{app:theory}

When we generate text with an autoregressive Transformer, we have to maintain a KV-cache that grows linearly with the length of the input and text. 
In \Cref{app:related-work-reduce-arch}, we discussed a number of architectural modifications that either reduce the size of the KV-cache or do away with it altogether. In particular, when generating text with linear attention (\eg \cite{arora2024simple}), we only need to maintain a constant-sized object -- the KV-state matrix -- during generation. 

Like the KV-state matrix in linear attention, \artifacts consume a constant amount of memory (\ie their size is a hyperparameter, which can be set independently of the input length). 
However, they differ from the KV-state in how they are updated. In this work, \artifacts are updated using \method -- gradient descent on synthetically generated data. On the other hand, KV-states are updated using a linear attention update rule. 

In this section, we will study the update rules for attention, linear attention, and gradient descent when applied to the multi-query associative recall (MQAR) problem~\cite{arora2023zoologymeasuringimprovingrecall}, a popular synthetic benchmark task used for studying the capabilities of long-context architectures. 
In particular, we consider a variant of the standard MQAR problem where key-value pairs are repeated. 
First, we highlight some equivalences between the update rules of these approaches in the case where input keys are orthonormal.
Then, in the more challenging case where input keys are in a Johnson-Lindenstrauss embedding, we provide a separation result showing that the gradient descent update rule is able to exactly solve an MQAR problem that linear attention cannot. 

These theoretical results provide intuition for why constant-sized \artifacts are able to match the performance of full KV-caches in long-context settings when linear-attention architectures have struggled to do so. 

\subsection{Notation}
All vectors are assumed to be row vectors.

Parenthesized superscripts (e.g. $\vk^{(1)}$) denote some temporal quality of an element.  Subscripts denote different elements in a set, as is standard.

A concise explanation for each variable:
\begin{itemize}[leftmargin=*]
    \item $\modelDim:$ model (and token) dimension. 
    \item $\numPairs:$ number of unique key-value pairs.
    \item $\numQueries:$ number of queries.
    \item $\contextSize:$ number of key-value pairs in stream. 
\end{itemize}

\subsection{MQAR}
We define the Multiple Query Associative Recall (MQAR) problem. 
\begin{definition}
    There is a universe of keys: 
\[ K \subset \R^{1\times\modelDim},\]
and values: 
\[ V \subset \R^{1\times\modelDim}.\]
\end{definition}


\begin{definition}
~\cite{arora2023zoologymeasuringimprovingrecall} In the MQAR problem, the input is: 
\[\kvpair{1}, \hdots, \kvpair{\contextSize} \text{ where } \kvpair{\timestep} \in K\times V \text{ for } 1\le \timestep \le \contextSize,\] 

followed by a set of queries \[\vq_{1}, \hdots \vq_{\numQueries} \text{ where } \vq_i \in K \text{ for } 1\le i \le \numQueries.\] 

Then for each $i\in[\numQueries]$, output:
\[
\begin{cases}
    \vv_{i^{*}} \text{ where } i^{*} = \max\{i\in[1,\contextSize] | \vk_{i}=\vq_{j}\} \\
    \bm{0}^{\modelDim} \text{ if no such } i \text{ exists.}
\end{cases}
\]

\end{definition}

\subsection{$\mathbf{\numPairs}-\repetitiveMQAR$}
\begin{definition}
$\numPairs-\repetitiveMQAR$ is a special case where each $(K^{(t)}, V^{(t)})\in \kvSet$, where:
\[S = \{(\vk_1, \vv_1), \hdots, (\vk_{\numPairs}, \vv_{\numPairs})\}.\]
Additionally, $\vk_{i}$ is unique. 
\end{definition}

\begin{definition}
To capture this, $r_{i}^{(\timestep)}$ is defined as the number of occurrences of $(\vk_{i}, \vv_{i})$ in the stream at timestep $\timestep$.
\end{definition}

\subsubsection{Orthonormal Embedding}
First, we will look at the MQAR problem in a restricted case, when all keys are orthonormal. 
\begin{definition}\label{def:orthonorm}
We call the set $K$ to be orthonormal if for all $ \ \vk,\vk'\in K$:
    \[ \langle \vk, \vk' \rangle = 
    \begin{cases} 
        0 & \text{ if } \vk \neq \vk' \\
        1 & \text{ otherwise.}
    \end{cases}
\]
\end{definition}

\subsubsection{Johnson-Lindenstrauss Embedding}
Next, we will look at the MQAR problem in a restricted case, when all keys are in a JL embedding.
\begin{definition}\label{def:JL}
Let $\eps>0$, we call the set $K$ to be $\eps-$JL if for all $\quad \vk,\vk'\in K$:
\[  \langle \vk, \vk' \rangle = 
    \begin{cases} 
        [-\eps, \eps] & \text{ if } \vk \neq \vk' \\
        1 & \text{ otherwise.}
    \end{cases}.
\]
\end{definition}

\subsection{Model Definitions}
Below, we will describe three different model architectures. While they each exhibit different performance and capabilities they can be describe with a common framework for the MQAR problem.
\begin{enumerate} [leftmargin=*]
\item State: is how the model store Key-Value pairs. \item Update rule: how the model incorporates new Key-Value pairs into its state.
\item Query rule: how the model uses its state to answer a look up a value or a query. 
\end{enumerate}
\subsubsection{Transformer}

\begin{enumerate}[leftmargin=*]
    \item The state is: \[\mW^{(\timestep)} = (\mK^{(\timestep)}, \mV^{(\timestep)}),\] where, \[\mK^{(\timestep)}\in\R^{\timestep\times\modelDim},\mV^{(\timestep)}\in\R^{\timestep\times\modelDim}.\] Note that this consumes more memory as the context gets longer. 
    \item The update rule is:
    \[\mK^{(\timestep+1)}=\mK^{(\timestep)}\oplus \vk^{(\timestep+1)}, \mV^{(\timestep+1)}=\mV^{(\timestep)}\oplus \vv^{(\timestep+1)}\]
    \item On query $\vq \in K,$ return: 
    \[\vq \left(\mK^{(\timestep)}\right)^{\top}\mV^{(\timestep)}.\]
    
\end{enumerate}
These rules define the transformer setting for MQAR.

\subsubsection{Linear Attention}
\begin{enumerate}[leftmargin=*]
    \item The state: \[\state^{(\timestep)}\in\R^{\modelDim\times\modelDim}.\]
    \item The update rule is defined as: 
    \[\state^{(\timestep+1)}=\state^{(\timestep)}+(\vk^{(\timestep+1)})^{\top}(\vv^{(\timestep+1)}).\] With the initial matrix being initialized to zeros. I.e. $\state^{(0)} = \bm{0}^{\modelDim\times\modelDim}$. 
    \item On query q, return:
    \[\vq\mW^{(\timestep)}.\]
\end{enumerate}
\begin{lemma}
    ~\cite{yang2025parallelizinglineartransformersdelta} Linear attention rule emerges if we were to update using the loss function $-\vk^{(t)}\mW^{(t)}\vv^{t}$.
\end{lemma}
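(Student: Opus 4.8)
The plan is to exhibit the linear attention update as exactly one step of gradient descent, with unit learning rate, on the stated bilinear loss evaluated at the incoming key--value pair. First I would fix the optimization variable to be the state matrix $\mW \in \R^{\modelDim \times \modelDim}$ and read the loss for the pair arriving at time $\timestep+1$ as the scalar-valued function
\[
\mathcal{L}(\mW) \;=\; -\,\key{\timestep+1}\,\mW\,\valT{\timestep+1},
\]
which is indeed a scalar because $\key{\timestep+1}$ is $1 \times \modelDim$, $\mW$ is $\modelDim \times \modelDim$, and $\valT{\timestep+1}$ is $\modelDim \times 1$. (This interprets the $\vv^{(t)}$ in the statement as the column vector $\valT{t}$, forced by the row-vector convention.)

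Next I would compute the gradient by expanding the bilinear form coordinatewise. Writing $k_a$ and $v_b$ for the entries of $\key{\timestep+1}$ and $\val{\timestep+1}$, we have $\mathcal{L}(\mW) = -\sum_{a,b} k_a\,W_{ab}\,v_b$, so that $\partial \mathcal{L}/\partial W_{ab} = -k_a v_b$. Assembling these partials into a matrix gives
\[
\nabla_{\mW}\,\mathcal{L} \;=\; -\,\keyT{\timestep+1}\,\val{\timestep+1},
\]
the negated outer product of the key and value, which lives in $\R^{\modelDim \times \modelDim}$ as required.

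Finally I would take a gradient-descent step with learning rate $1$, observing that the minus sign in the loss flips the descent into an addition:
\[
\mW^{(\timestep+1)} \;=\; \mW^{(\timestep)} - \nabla_{\mW}\,\mathcal{L} \;=\; \mW^{(\timestep)} + \keyT{\timestep+1}\,\val{\timestep+1},
\]
which is precisely the linear attention update rule stated above (and a general learning rate $\eta$ simply rescales the stored pair, leaving the rank-one structure intact). I expect no genuine obstacle here: the entire content is matrix-calculus bookkeeping, namely keeping the row/column conventions straight so the outer product has the correct $\modelDim \times \modelDim$ shape, and tracking the single sign flip. The conceptual takeaway, which I would note explicitly, is that this update is \emph{memoryless} in the sense that each step depends only on the current key--value pair and not on the accumulated state $\mW^{(\timestep)}$; this is exactly the feature that later distinguishes it from the gradient-descent dynamics of \method in the separation result, and motivates contrasting the two update rules on the $\numPairs$-$\repetitiveMQAR$ problem.
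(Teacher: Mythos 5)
Your proposal is correct and follows exactly the template the paper uses for the companion delta-rule lemma immediately below this one (compute $\nabla_{\mW}\mathcal{L}$, take a single gradient step with $\eta=1$): the paper leaves this particular lemma to the citation, but your computation $\nabla_{\mW}\left(-\vk^{(t)}\mW^{(t)}(\vv^{(t)})^{\top}\right) = -(\vk^{(t)})^{\top}\vv^{(t)}$ and the resulting update $\mW^{(t+1)} = \mW^{(t)} + (\vk^{(t)})^{\top}\vv^{(t)}$ reproduce the stated linear attention rule exactly. Your reading of $\vv^{(t)}$ as a column vector (forced by the paper's row-vector convention so the loss is a scalar) and your observation that the increment is independent of the accumulated state are both correct.
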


It is important to mention here that we are not using any kernels for linear attention. 
These rules define the linear attention setting for MQAR.
\begin{lemma}\label{Lemma: DeltaRule}
    ~\cite{yang2025parallelizinglineartransformersdelta} $\cacheMatrixTime{\timestep+1}=\cacheMatrixTime{\timestep}-\keyT{\timestep}\key{\timestep}\cacheMatrixTime{\timestep}+\keyT{\timestep}\val{\timestep}$ is the update rule that emerges when we use the gradient descent loss function: $\frac{1}{2}||\bm{k}^{(\timestep)}\mW^{(\timestep)}-\vv^{(\timestep)}||_{2}^{2}$. 
\end{lemma}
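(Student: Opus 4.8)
The plan is to treat the state $\cacheMatrix \in \R^{\modelDim\times\modelDim}$ as the optimization variable and directly compute the gradient of the per-step quadratic objective
\[
\ell(\cacheMatrix) = \tfrac{1}{2}\,\bigl\|\key{\timestep}\cacheMatrix - \val{\timestep}\bigr\|_2^2,
\]
where $\key{\timestep}, \val{\timestep} \in \R^{1\times\modelDim}$ are row vectors, then show that a single gradient-descent step with unit learning rate, evaluated at $\cacheMatrixTime{\timestep}$, reproduces the stated delta rule. There is no deep content here beyond matrix calculus; the work is entirely in the gradient computation and in pinning down the implicit step size.

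First I would write the residual as the row vector $\mathbf{e} = \key{\timestep}\cacheMatrix - \val{\timestep} \in \R^{1\times\modelDim}$, so that $\ell = \tfrac{1}{2}\,\mathbf{e}\,\mathbf{e}^{\top}$. Differentiating entrywise, $\ell = \tfrac{1}{2}\sum_b\bigl(\sum_a (\key{\timestep})_a \cacheMatrix_{ab} - (\val{\timestep})_b\bigr)^2$, so that
\[
\frac{\partial \ell}{\partial \cacheMatrix_{ab}} = (\key{\timestep})_a\,\bigl(\key{\timestep}\cacheMatrix - \val{\timestep}\bigr)_b .
\]
Assembling these entries into a matrix recognizes the outer-product structure and gives the compact form
\[
\nabla_{\cacheMatrix}\,\ell(\cacheMatrix) = \keyT{\timestep}\bigl(\key{\timestep}\cacheMatrix - \val{\timestep}\bigr),
\]
a $\modelDim\times\modelDim$ matrix formed from the column vector $\keyT{\timestep}$ times the row residual.

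Then I would substitute $\cacheMatrix = \cacheMatrixTime{\timestep}$ and take the gradient step $\cacheMatrixTime{\timestep+1} = \cacheMatrixTime{\timestep} - \eta\,\nabla_{\cacheMatrix}\,\ell(\cacheMatrixTime{\timestep})$. Expanding the product yields
\[
\cacheMatrixTime{\timestep+1} = \cacheMatrixTime{\timestep} - \eta\,\keyT{\timestep}\key{\timestep}\cacheMatrixTime{\timestep} + \eta\,\keyT{\timestep}\val{\timestep},
\]
which coincides with the claimed update exactly when $\eta = 1$. The only subtlety — and the thing I would flag as the sole obstacle — is bookkeeping: one must fix the row-vector convention consistently (so that $\keyT{\timestep}\key{\timestep}$ is the $\modelDim\times\modelDim$ Gram matrix acting on the left of $\cacheMatrixTime{\timestep}$, not a scalar), and note explicitly that the delta rule as written corresponds to a learning rate of $1$. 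With those conventions settled, the identification is immediate and the lemma follows.
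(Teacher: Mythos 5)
Your proposal is correct and follows essentially the same route as the paper's proof: compute $\nabla_{\cacheMatrix}\,\tfrac{1}{2}\|\key{\timestep}\cacheMatrix-\val{\timestep}\|_2^2 = \keyT{\timestep}(\key{\timestep}\cacheMatrix-\val{\timestep})$ and apply one gradient step with $\eta=1$. Your explicit entrywise derivation and the remark about the row-vector convention are minor additions; the substance is identical.
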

\begin{definition}
    \[\mathcal{L} = \frac{1}{2}||\bm{k}^{(\timestep)}\mW^{(\timestep)}-\vv^{(\timestep)}||_{2}^{2} \]
\end{definition}
    \begin{proof}
        In general, gradient descent has the update rule:
        \begin{equation}\label{eq:update}
           \mW^{(\timestep+1)}=\mW^{(\timestep)}-\eta\nabla_{\mW^{(\timestep)}}. 
        \end{equation}
        Taking the gradient of the loss function gives us:
        \begin{align*}
             \nabla_{\mW} \frac{1}{2}||\bm{k}^{(\timestep)}\mW^{(\timestep)}-\vv^{(\timestep)}||_{2}^{2} &= \left(\vk^{(\timestep)}\right)^{\top} (\vk^{(\timestep)}\mW^{(\timestep)}-\vv^{(\timestep)})\\
            &=\keyT{\timestep}\key{\timestep}\cacheMatrixTime{\timestep}-\keyT{\timestep}\val{\timestep}.
        \end{align*}
    
        Using the above and choosing $\eta=1$, we get for \Cref{eq:update}
        \begin{align*}
            \cacheMatrixTime{\timestep+1} &= \cacheMatrixTime{\timestep}-1\left(\keyT{\timestep}\key{\timestep}\cacheMatrixTime{\timestep}-\keyT{\timestep}\val{\timestep}\right)\\
            &=\cacheMatrixTime{\timestep}-\keyT{\timestep}\key{\timestep}\cacheMatrixTime{\timestep}+\keyT{\timestep}\val{\timestep} .\\ 
        \end{align*}
   \end{proof} 

\subsubsection{Gradient Descent}
Gradient descent training on the cache. We look at the capability of this trained state on a certain input.

\begin{enumerate}[leftmargin=*]
    \item The state at time $\timestep$ is defined as: \[\cacheMatrixTime{\timestep}\in\R^{\modelDim\times\modelDim}.\]
    \item The update rule which follows from \Cref{Lemma: DeltaRule}: 
    \[\cacheMatrixTime{\timestep+1}=\cacheMatrixTime{\timestep}-\keyT{\timestep}\key{\timestep}\cacheMatrixTime{\timestep}+\keyT{\timestep}\val{\timestep}.\] With the initial matrix being initialized to zeros. I.e. $\cacheMatrixTime{0} = \bm{0}^{\modelDim\times\modelDim}$.
    \item On query q, return:
    \[\vq\mW^{(\timestep)}.\]
\end{enumerate}

\subsubsection{Orthonormal Case} 
We now see how the three models perform on the $\numPairs-\repetitiveMQAR$ when $K$ is orthonormal.

\textbf{Transformer}
\begin{lemma}
On every input to MQAR (even those for 1-rep-MQAR) the state of Transformer needs $\Omega(\contextSize\modelDim)$ parameters.
\end{lemma}

Intuitively, at each timestep, you will append $\modelDim$ parameters to the state. At timestep $t$ the model will have $\timestep\modelDim$ parameters.

\textbf{Linear attention}
\begin{theorem}\label{Orthogonal:LA}
Linear attention can solve $\repetitiveMQAR$ for any $\numPairs\ge1$ and orthonormal $K$, up to scaling (producing $r_{i}^{(\timestep)}\vv_{i}$ when $\mW^{(\timestep)}$ is queried with $\vk_i$) and all keys being distinct with $O(\modelDim^2)$ parameters.
\end{theorem}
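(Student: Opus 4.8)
The plan is to exploit the fact that the linear-attention update rule is purely additive, which lets me write the state $\cacheMatrixTime{\timestep}$ in closed form and then collapse it using orthonormality. First I would unroll the recurrence $\cacheMatrixTime{\timestep+1}=\cacheMatrixTime{\timestep}+\keyT{\timestep+1}\val{\timestep+1}$ from the zero initialization $\cacheMatrixTime{0}=\bm{0}^{\modelDim\times\modelDim}$ to obtain
\[
\cacheMatrixTime{\timestep}=\sum_{s=1}^{\timestep}\keyT{s}\val{s}.
\]
Because every stream element is drawn from the fixed set $S=\{(\vk_1,\vv_1),\dots,(\vk_{\numPairs},\vv_{\numPairs})\}$ and the pair $(\vk_i,\vv_i)$ occurs exactly $r_i^{(\timestep)}$ times by time $\timestep$, I would regroup this sum by unique pair to get $\cacheMatrixTime{\timestep}=\sum_{i=1}^{\numPairs} r_i^{(\timestep)}\,\vk_i^{\top}\vv_i$. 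This regrouping is legitimate precisely because the repetitive structure forces all occurrences of a given key $\vk_i$ to carry the same value $\vv_i$.

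Next I would compute the response to a query $\vq=\vk_j$. Distributing the query over the sum,
\[
\vk_j\cacheMatrixTime{\timestep}=\sum_{i=1}^{\numPairs} r_i^{(\timestep)}\,(\vk_j\vk_i^{\top})\,\vv_i=\sum_{i=1}^{\numPairs} r_i^{(\timestep)}\,\langle \vk_j,\vk_i\rangle\,\vv_i,
\]
and then invoke orthonormality (\Cref{def:orthonorm}), under which $\langle \vk_j,\vk_i\rangle=\delta_{ij}$, so every cross term vanishes and only the $i=j$ term survives, leaving $r_j^{(\timestep)}\vv_j$. This is exactly the claimed output, with the scalar $r_j^{(\timestep)}$ accounting for the ``up to scaling'' qualifier; since all $\vk_i$ are distinct, each query isolates a unique stored value. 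Finally, the parameter count is immediate: the state is a single $\modelDim\times\modelDim$ matrix, so $O(\modelDim^2)$ parameters suffice independently of the stream length $\contextSize$.

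There is no substantive obstacle here --- the result follows directly from linearity of the update and query rules together with orthonormality --- so the only point requiring care is bookkeeping: verifying that the additive unrolling correctly accumulates multiplicities into the counts $r_i^{(\timestep)}$, and noting that the unavoidable $r_j^{(\timestep)}$ factor (the update rule carries no normalization) is exactly why the guarantee is only ``up to scaling.'' This scaling is benign for recall, since the returned vector still points in the direction of the correct value $\vv_j$, but it foreshadows the genuine difficulty in the JL setting (\Cref{def:JL}), where the cross terms no longer vanish and a more delicate argument will be required.
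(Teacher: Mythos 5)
Your proposal is correct and matches the paper's argument: both establish the identity $\cacheMatrixTime{\timestep}=\sum_{i=1}^{\numPairs} r_i^{(\timestep)}\,\vk_i^{\top}\vv_i$ (the paper by induction on $\timestep$, you by directly unrolling the additive recurrence and regrouping, which is the same computation) and then collapse the query via orthonormality to get $r_j^{(\timestep)}\vv_j$. The only difference is presentational, so no further comparison is needed.
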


\begin{proof}
    We first prove that for any $\timestep\ge0$:

    \begin{equation}\label{eq:orthola} \state^{(\timestep)} = \sum_{i'=1}^{\numPairs} r_{i'}^{(t)} \vk_{i'}^{\top} \vv_{i'}.
    \end{equation}

    \textbf{Base Case:} Initially, $\mW^{(0)}=\bm{0}^{\modelDim\times\modelDim}$. From this, we indeed have:
    \[ \state^{(0)} = \sum_{i'=1}^{\numPairs} r_{i'}^{(0)} \vk_{i'}^{\top} \vv_{i'},\]
    
    since for all $i'\in[\numPairs]$: 
    \[r_{i'}^{(0)} = 0.\]

    \textbf{Inductive hypothesis:} Assume that the state matrix at some arbitrary integer timestep $\timestep$ is as claimed. I.e.:  
    \[ \state^{(\timestep)} = \sum_{i'=1}^{\numPairs} r_{i'}^{(t)} \vk_{i'}^{\top} \vv_{i'}.\]

    \textbf{Inductive step:}
    If $\kvpair{j}$ appears at timestep $\timestep+1$ the update rule will be: 
    \begin{align*}\state^{(\timestep+1)} &=\state^{(\timestep)}+(\vk^{(\timestep+1)})^{\top}\vv^{(\timestep)} \\
    &= \state^{(\timestep)}+(\vk_{j})^{\top}\vv_{j}
    \end{align*}

    By the inductive hypothesis, we have that: 
    \begin{align*}
        \state^{(\timestep+1)} 
        &= \state^{(\timestep)}+\vk_{j}(\vv_{j})^{\top} \\
        &= \sum_{i'=1}^{\numPairs} r_{i'}^{(t)} \vk_{i'}^{\top} \vv_{i'}+\vk_{j}(\vv_{j})^{\top} \\
        &= \sum_{i'=1}^{\numPairs} r_{i'}^{(t+1)} \vk_{i'}^{\top} \vv_{i'}.
    \end{align*}

    The final step follows from the fact that $r_{j}^{(\timestep+1)}= r_{j}^{(\timestep)}+1$  when $\kvpair{\timestep+1}=(\vk_{j}, \vv_{j})$ and $r_{i}^{(\timestep+1)}= r_{i}^{(\timestep)}$ for all $i\neq j$.\\
    The proof of \Cref{eq:orthola} is complete by induction. 
    
    Finally, it is the case that on query $\vk_{i}$:
    \begin{align*}
        \vk_{i}\mW^{(\timestep)} &= \vk_{i}\sum_{i'=1}^{\numPairs} r_{i'}^{(t)} \vk_{i'}^{\top} \vv_{i'} \\
        &= \sum_{i'=1}^{\numPairs} r_{i'}^{(t)} \vk_{i}\vk_{i'}^{\top} \vv_{i'} \\
        &= \sum_{i'\neq i} r_{i'}^{(t)} \vk_{i}\vk_{i'}^{\top} \vv_{i'} +  r_{i}^{(t)} \vk_{i}\vk_{i}^{\top} \vv_{i}\\
        &= \sum_{i'\neq i} r_{i'}^{(t)}\cdot 0\cdot \vv_{i'} + r_{i}^{(t)}  \cdot 1 \cdot \vv_{i}\\
        &=   r_{i}^{(t)}\cdot\vv_{i},\\
    \end{align*}
    as desired. In the above, the second last inequality follows from from \Cref{def:orthonorm} and the fact that all $\vk_{i}$ are distinct. 

    $O(\modelDim^{2})$ parameters are needed as the matrix must have dimension $\modelDim\times\modelDim$
\end{proof}

\textbf{Gradient Descent}
\begin{theorem}
    Gradient descent is able to exactly solve the $\numPairs-\repetitiveMQAR$ (produce $\vv_{i}$ when $\mW^{(\timestep)}$ is queries with $\vk_{i}$) with  $O(\modelDim^2)$ parameters.
\end{theorem}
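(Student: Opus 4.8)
The plan is to establish, by induction on the timestep $\timestep$, a state invariant analogous to \Cref{eq:orthola} from the proof of \Cref{Orthogonal:LA}, but in which each key-value pair that has appeared contributes with coefficient exactly one rather than with its multiplicity $r_{i'}^{(\timestep)}$. Concretely, I would show that for every $\timestep \ge 0$,
\[
\cacheMatrixTime{\timestep} = \sum_{i' \,:\, r_{i'}^{(\timestep)} \ge 1} \vk_{i'}^{\top}\vv_{i'},
\]
i.e. the state is the sum of the rank-one terms $\vk_{i'}^{\top}\vv_{i'}$ taken over exactly those pairs present in the stream so far. Given this invariant, the query step is immediate: on query $\vk_i$, \Cref{def:orthonorm} together with distinctness of the keys collapses the sum to its single surviving term, yielding $\vk_i \cacheMatrixTime{\timestep} = \vv_i$ whenever pair $i$ has appeared, and $\bm{0}^{\modelDim}$ otherwise, matching the MQAR specification.

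The crux of the argument -- and the one place where the delta rule departs from plain linear attention -- is the inductive step. First I would rewrite the update rule of \Cref{Lemma: DeltaRule} as a projection followed by an additive write,
\[
\cacheMatrixTime{\timestep+1} = \left(I - \vk_j^{\top}\vk_j\right)\cacheMatrixTime{\timestep} + \vk_j^{\top}\vv_j,
\]
where $(\vk_j, \vv_j)$ is the pair appearing at time $\timestep+1$, and observe that for a unit key the operator $I - \vk_j^{\top}\vk_j$ is the orthogonal projection away from $\vk_j$. Applying the inductive hypothesis and the identity $\vk_j \vk_{i'}^{\top} = \langle \vk_j, \vk_{i'}\rangle$, orthonormality makes $\vk_j^{\top}\vk_j\,\cacheMatrixTime{\timestep}$ equal to $\vk_j^{\top}\vv_j$ if $j$ had already appeared and equal to $\bm{0}$ otherwise. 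In the repeat case the projection deletes the old $\vk_j^{\top}\vv_j$ term and the additive write reinstates it, so the coefficient on pair $j$ stays at one and the index set is unchanged; in the fresh case the projection leaves $\cacheMatrixTime{\timestep}$ untouched and the write appends a new term $\vk_j^{\top}\vv_j$, enlarging the index set by $\{j\}$. In both cases the invariant is preserved. This idempotent erase-then-write behavior is exactly what prevents the multiplicity blow-up of \Cref{Orthogonal:LA}, where repeated pairs accumulate the factor $r_{i}^{(\timestep)}$.

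Finally, the parameter count is immediate: the state $\cacheMatrixTime{\timestep}$ is a single $\modelDim \times \modelDim$ matrix, so $O(\modelDim^2)$ parameters suffice, matching the linear-attention bound. I expect the only genuinely delicate point to be the bookkeeping in the inductive step: cleanly separating the fresh-key and repeat-key cases and confirming that $\langle \vk_j, \vk_{i'}\rangle$ vanishes for all $i' \neq j$, so that the projection acts on the $j$-th term alone and on nothing else. The base case ($\cacheMatrixTime{0} = \bm{0}^{\modelDim\times\modelDim}$ paired with an empty index set) and the final query computation are routine given the orthonormality relations already exploited in \Cref{Orthogonal:LA}.
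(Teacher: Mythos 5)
Your proposal is correct and follows essentially the same route as the paper: the invariant $\cacheMatrixTime{\timestep} = \sum_{i'} \mathbb{1}_{r^{(\timestep)}_{i'}>0}\,\vk_{i'}^{\top}\vv_{i'}$ is exactly the paper's inductive claim, and your erase-then-write analysis of the update is the paper's ``peel'' step, merely rephrased via the projection $I - \vk_j^{\top}\vk_j$. The query computation and the $O(\modelDim^2)$ parameter count also match the paper's argument.
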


\begin{proof}
    Here we can handle repetitions because our update rule includes a "peel" term. This means it removes the current value stored under a key before updating it with a new value. 

    We will show by induction that for all $\timestep\ge 0$:
    \[
        \cacheMatrixTime{t} = \sum_{i'=1}^{\numPairs} \mathbb{1}_{r^{(t)}_{i'}>0} \cdot \vk_{i'}^{\top}\vv_{i'}.
    \]

    \textbf{Base Case:} Initially, the cache matrix is set to all zeros. From this, naturally follows that:
    \[
        \cacheMatrixTime{0} = \sum_{i'=1}^{\numPairs} 0 \cdot \vk_{i'}^{\top}\vv_{i'},
    \]
    since for all $i'$ 
    
    \[ r^{(0)}_{i'} = 0.\]

    \textbf{Inductive hypothesis:} Assume that at some arbitrary timestep $\timestep$, we have:
    \[
        \cacheMatrixTime{t} = \sum_{i'}^{\numPairs} \mathbb{1}_{r^{(t)}_{i'>0}} \cdot \vk_{i'}^{\top}\vv_{i'}
    \]

    \textbf{Inductive step:}
    If $(\vk_{\ell}, \vv_{\ell})$ appears at timestep $\timestep+1$ the update will be: 
    \begin{align*}
        \sum_{i=1}^{\numPairs} \mathbb{1}_{r^{(t+1)}_{i>0}} \vk_{i}^{\top} \vv_{i} &= \left(\sum_{i'=1}^{\numPairs}\mathbb{1}_{r^{(t)}_{i'>0}} \vk_{i'}^{\top} \vv_{i'} \right) - \left( \sum_{i'=1}^{\numPairs}  \mathbb{1}_{r^{(t)}_{i'>0}} \vk_{\ell}^{\top}\vk_{\ell} \vk_{i'}^{\top} \vv_{i'} \right) + \vk_{\ell}^{\top} \vv_{\ell} \\
        &\text{the second term reduces to just peeling the term relating to $\vk_{\ell}$, if it exists, as all other inner products are $0$,} \\
        &= \left( \sum_{i'=1}^{\numPairs}\mathbb{1}_{r^{(t)}_{i'>0}} \vk_{i'}^{\top} \vv_{i'} \right) - \left(   \mathbb{1}_{r^{(t)}_{\ell>0}} \cdot  \vk_{\ell}^{\top} \vv_{\ell} \right) + \vk_{\ell}^{\top} \vv_{\ell} \\
        &= \left(\sum_{i'\neq \ell}^{\numPairs}\mathbb{1}_{r^{(t)}_{i'>0}} \vk_{i'}^{\top} \vv_{i'} \right) + \vk_{\ell}^{\top} \vv_{\ell} \\
    \end{align*}

    This replaces the value associated with $\vk_{\ell}$ with the new value, while keeping everything else the same. This is the form that we want, as the only time we want to add a key if it is an new key. 

    Finally, it is the case that on query $\vk_{i}$:
    \begin{align*}
        \vk_{i}\cdot \mW^{(\timestep)} &= \vk_{i} \cdot \left(\sum_{i'=1}^{\numPairs} \mathbb{1}_{r^{(t)}_{i'>0}} \vk_{i'}^{\top} \vv_{i'}\right) \\
        &=  \left(\sum_{i'=1}^{\numPairs} \mathbb{1}_{r^{(t)}_{i'>0}} \vk_{i} \cdot\vk_{i'}^{\top} \vv_{i'}\right) \\
        &=  \mathbb{1}_{r^{(t)}_{i>0}} \cdot 1 \cdot \vv_{i}\ \\
        &=  \mathbb{1}_{r^{(t)}_{i>0}}\cdot \vv_{i}\ \\
    \end{align*}
    Again here a matrix of dimension $\modelDim\times\modelDim$ can store $\modelDim$ orthogonal vectors. Thus this requires, $O(\modelDim^{2})$ parameters.
   
\end{proof}

\subsubsection{JL Embedding}
We now see how the 3 models perform on the $\numPairs-\repetitiveMQAR$ when $K$ is $\eps-$JL.

\textbf{Transformer}
\begin{lemma}
On every input to MQAR (even those for 1-rep-MQAR) the state of Transformer needs $\Omega(\contextSize\modelDim)$ parameters.
\end{lemma}
We note that when $K$ is $\eps-$JL it is no longer possible to get the exact answer from query rule $\vk_{i}\mW^{(\timestep)}$. Thus, we need to add a decoding step.

\begin{definition}\label{def:Decode}
The output decoding step is $\vv_{i^{*}}$ where:
\[i^{*} = \arg\max_{i'\in[\numPairs]}\langle\vv_{i'}, \vk_{i}\mW^{(\timestep)}\rangle.\]
\end{definition}

\begin{definition} For all $i,j\in [\numPairs]$, define:
\[ \error{i}{j} = \langle \vk_{i}, \vk_{j} \rangle. \]
\end{definition}

\textbf{Linear Attention}
\begin{theorem}
Linear attention (+ decoding as in \Cref{def:Decode}) is unable to solve even the $2-\repetitiveMQAR$ and each $\vv_{i}$ being 1-hot encoding unless $K$ is $\omega\left(\frac{1}{\contextSize}\right)-$JL. 
\end{theorem}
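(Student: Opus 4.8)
The plan is to exploit the fact that the linear-attention state decomposition derived in the proof of \Cref{Orthogonal:LA} (equation \ref{eq:orthola}) never actually used orthonormality of $K$: the additive update rule alone gives $\state^{(\timestep)} = \sum_{i'=1}^{\numPairs} r_{i'}^{(\timestep)} \vk_{i'}^{\top} \vv_{i'}$ for \emph{any} key set, in particular for an $\eps$-JL one. So I would first restate this decomposition and note that its inductive proof transfers verbatim to the JL setting. This reduces the theorem to analyzing the decoding step of \Cref{def:Decode} on a state of this form, with no further appeal to the update dynamics.

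Next I would specialize to $\numPairs = 2$ with $\vv_1, \vv_2$ orthonormal $1$-hot vectors and compute the query response together with the two decoding scores. Querying with $\vk_1$ gives
\[
\vk_1 \state^{(\timestep)} = r_1^{(\timestep)} \vv_1 + r_2^{(\timestep)} \error{1}{2} \vv_2,
\]
using $\langle \vk_1, \vk_1 \rangle = 1$ and $\langle \vk_1, \vk_2 \rangle = \error{1}{2}$. Taking inner products against the $1$-hot values and using $\langle \vv_i, \vv_j \rangle = \delta_{ij}$ yields $\langle \vv_1, \vk_1 \state^{(\timestep)} \rangle = r_1^{(\timestep)}$ and $\langle \vv_2, \vk_1 \state^{(\timestep)} \rangle = r_2^{(\timestep)} \error{1}{2}$. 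Hence the decoder returns the correct index $1$ if and only if $r_1^{(\timestep)} > r_2^{(\timestep)} \error{1}{2}$.

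Then I would construct the single adversarial instance that breaks this inequality. I take the stream to contain one occurrence of $\kvpair{1} = (\vk_1, \vv_1)$ and $\contextSize - 1$ occurrences of $(\vk_2, \vv_2)$, so that $r_1^{(\timestep)} = 1$ and $r_2^{(\timestep)} = \contextSize - 1$, and I pick the $\eps$-JL embedding so that $\error{1}{2} = \eps$ (two unit vectors at inner product exactly $\eps$, realizable for any $\modelDim \geq 2$ and leaving the $1$-hot values untouched). Since every occurrence of $\vk_1$ carries value $\vv_1$, the correct answer to the query $\vk_1$ is unambiguously $\vv_1$; but the decoding score for index $2$ is $(\contextSize - 1)\eps$, which exceeds the score $1$ for index $1$ exactly when $\eps > \tfrac{1}{\contextSize - 1}$, so the decoder outputs $\vv_2 \neq \vv_1$. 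Contrapositively, correctness on all $2\text{-}\repetitiveMQAR$ instances forces $\eps \le \tfrac{1}{\contextSize-1}$, i.e. $\eps = O(1/\contextSize)$; whenever $\eps = \omega(1/\contextSize)$ this fixed instance is misdecoded for all large $\contextSize$, giving the claimed separation.

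The computations are routine; the two points needing care are (i) confirming the state decomposition is genuinely orthonormality-free so it carries over to the JL case, and (ii) verifying that two keys with $\langle \vk_1, \vk_2 \rangle = \eps$ plus $1$-hot values form an \emph{admissible} $\eps$-JL instance. The main conceptual obstacle is getting the quantifiers right: I must exhibit one hard instance on which decoding provably fails rather than merely show a shrinking margin, so I would keep the construction fully explicit and track the strict-versus-nonstrict inequality at the decision boundary, taking $\eps$ strictly above $\tfrac{1}{\contextSize-1}$ to force a strict misranking and avoid a tie in the $\arg\max$.
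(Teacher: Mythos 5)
Your proposal is correct and follows essentially the same route as the paper: both rely on the observation that the additive state decomposition $\state^{(\timestep)} = \sum_{i'} r_{i'}^{(\timestep)} \vk_{i'}^{\top}\vv_{i'}$ holds for arbitrary $K$, compute the two decoding scores $r_1^{(\timestep)}$ and $r_2^{(\timestep)}\error{1}{2}$ on a query for $\vk_1$, and exhibit an instance where the cross-term $r_2^{(\timestep)}\error{1}{2}$ exceeds $r_1^{(\timestep)}$ so the $\arg\max$ decoder returns $\vv_2$. Your version is somewhat more explicit than the paper's (which only posits an input with $r_1^{(\timestep)} < r_2^{(\timestep)}\error{1}{2}$ rather than constructing it), but the underlying argument is identical.
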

\begin{proof}
    Due to the agreeance between different keys, when querying for key $i$, there is noise from other keys returned along with the correct answer. While we can tolerate some error, this error scales with the number of times the model has seen a single key. Making it unfit for longer contexts, or contexts with many repeats. 

    First, note that the base case \Cref{eq:orthola} from \Cref{Orthogonal:LA} still holds. In general, this holds for all $K$.
    
    Specifically, on query $\vk_{1}$ we have:
    
    \begin{align*}
        \vk_{1}\mW^{(\timestep)}=r_{1}^{(\timestep)}\langle\vk_{1},\vk_{1}\rangle\vv_{1} + r_{2}^{(\timestep)}\langle\vk_{1},\vk_{2}\rangle\vv_{2}
        &= r_{1}^{(\timestep)}\vv_{1}+r_{2}^{(\timestep)}\error{1}{2}\vv_{2}.
    \end{align*}

    Now, consider an input to $2-\repetitiveMQAR$ such that 
    \[r_{1}^{(\timestep)}<r_{2}^{(\timestep)}\error{1}{2}.\] Note that in this case:
    \[r_{1}^{(\timestep)}= \langle \vv_{1}, \vk_{1}\mW^{(\timestep)}\rangle < \langle \vv_{2}, \vk_{1}\mW^{(\timestep)}\rangle = r_{2}^{(\timestep)}\eps_{1,2}\] 
    and hence we output $\vv_{2}$ instead of $\vv_{1}$.

    If the embedding was $\omega(\frac{1}{N}$ the number of repeats could not overcome the $\eps$ value. 
    
\end{proof}

\textbf{Gradient Descent}
\begin{theorem}
       Gradient descent (+ decoding as in \Cref{def:Decode}) is able to exactly solve $\numPairs-\repetitiveMQAR$ with $O(\modelDim^2)$ parameters for $\eps-$JL $K$, as long as $\eps \le \frac{1}{\numPairs^{2}(\numPairs-1)}$ and $\alpha<\frac{\numPairs-1}{\numPairs+1}$.
\end{theorem}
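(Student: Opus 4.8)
The plan is to mirror the inductive, state-tracking strategy used for the orthonormal case in \Cref{Orthogonal:LA}, but to control the cross-talk that the $\eps$-JL inner products (\Cref{def:JL}) inject into the peel step of the gradient-descent update from \Cref{Lemma: DeltaRule}. Writing the update with learning rate $\alpha$ as $\cacheMatrixTime{\timestep+1} = \cacheMatrixTime{\timestep} - \alpha\,\keyT{\timestep}\key{\timestep}\cacheMatrixTime{\timestep} + \alpha\,\keyT{\timestep}\val{\timestep}$, I would first reduce the whole statement to a single \emph{margin condition}: for every key $\vk_i$ that actually occurs in the stream (i.e. $r_{i}^{(\contextSize)}>0$), the decoded answer of \Cref{def:Decode} is correct as soon as $\langle \vv_i,\vk_i\cacheMatrixTime{\contextSize}\rangle > \langle \vv_j,\vk_i\cacheMatrixTime{\contextSize}\rangle$ for all $j\neq i$. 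Everything then comes down to lower-bounding the ``diagonal'' read-out and upper-bounding the ``off-diagonal'' read-outs.

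The first concrete step is to expose a convenient representation of the state. Since $\cacheMatrixTime{0}=\bm 0$ and every update only adds a multiple of $\keyT{\timestep}$, the state stays in the span of the keys, so I would write $\cacheMatrixTime{\timestep}=\sum_{i=1}^{\numPairs}\keyT{i}\,\mathbf a_i^{(\timestep)}$, where the representation is unique because $\eps$-JL with small $\eps$ makes the Gram matrix diagonally dominant and hence the keys independent. Substituting $\langle\vk_i,\vk_\ell\rangle=\error{i}{\ell}$ into the update shows that processing $(\vk_\ell,\vv_\ell)$ touches only the single block $\mathbf a_\ell$, via
\[
\mathbf a_\ell \;\mapsto\; (1-\alpha)\,\mathbf a_\ell + \alpha\,\vv_\ell - \alpha\!\sum_{i\neq \ell}\error{\ell}{i}\,\mathbf a_i .
\]
Tracking the residual $\mathbf e_i=\mathbf a_i-\vv_i$, a self-update contracts $\mathbf e_\ell$ by the factor $(1-\alpha)$ and adds only an $O(\numPairs\eps)$ perturbation from the other occurring blocks; this is precisely the effect of the peel term, and it is what lets a key ``forget'' its stale value, exactly as in the orthonormal argument but now up to $\eps$-error. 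A useful sanity check is that once all occurring keys have been seen enough, the residuals reach a fixed point at which $\vk_\ell\cacheMatrixTime{\contextSize}=\vv_\ell$ exactly.

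The crux, and the step I expect to be hardest, is to turn this per-update contraction into a residual bound that is \emph{uniform in the stream length} $\contextSize$ and in the number of repetitions --- this is exactly the separation from linear attention, whose read-out error provably grows with the repetition count. Two features keep the error bounded regardless of $\contextSize$. First, the coupling term $\sum_{i\neq\ell}\error{\ell}{i}\mathbf a_i$ only sees keys that have actually occurred (un-seen blocks are still $\bm 0$), and on occurring blocks $\|\mathbf a_i\|$ never exceeds $1+O(\numPairs\eps)$. Second, the factor $(1-\alpha)$ makes each block self-correcting: a self-update sends $\|\mathbf e_\ell\|$ to at most $(1-\alpha)\max_i\|\mathbf e_i\| + \alpha(\numPairs-1)\eps\,(1+\max_i\|\mathbf e_i\|)$, so the maximum residual over occurring keys never grows and relaxes toward $O(\numPairs\eps)$ whenever $(\numPairs-1)\eps<1$. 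The delicate point is that repetitions are interleaved adversarially, so I cannot treat one key at a time; the invariant on $\max_i\|\mathbf e_i\|$ is what survives arbitrary interleaving, and the hypothesis $\eps\le \tfrac{1}{\numPairs^2(\numPairs-1)}$ keeps $(\numPairs-1)\eps$ well below $1$ and pins every off-diagonal read-out at $O(\numPairs\eps)$, far under the decoding threshold.

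Finally I would assemble the margin. For a key that recurs often the residual is at its fixed point and $\vk_i\cacheMatrixTime{\contextSize}=\vv_i$ up to $O(\numPairs\eps)$; the worst case is a key seen \emph{once}, whose diagonal read-out is only $\approx\alpha$ while every competing read-out is $O(\numPairs\eps)$. Correct decoding then requires the once-seen diagonal to beat the largest aggregate interference from the remaining $\numPairs-1$ keys, and this is where the upper bound on $\alpha$ enters: rewriting $\alpha<\tfrac{\numPairs-1}{\numPairs+1}$ as $\tfrac{1+\alpha}{1-\alpha}<\numPairs$ exhibits it as the algebraic condition that closes this balance. Combining the diagonal lower bound, the $O(\numPairs\eps)$ off-diagonal bound, and the two hypotheses yields a strictly positive margin for every occurring key, giving exact decoding with the $O(\modelDim^2)$ parameters of a single $\modelDim\times\modelDim$ state and completing the proof.
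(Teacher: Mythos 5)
Your error-propagation analysis is essentially the paper's: writing the state in the span of the keys, observing that processing $(\vk_\ell,\vv_\ell)$ touches only the $\vk_\ell$ block, and bounding the accumulated cross-talk by a geometric series in $(\numPairs-1)\eps$ is exactly the paper's induction on the coefficients $\errs{i}{j}{\timestep}=\mathbb{1}_{(\vk_i,\vv_j)\text{ has occurred}}+\Delta_{i,j}^{(\timestep)}$ with $\left|\Delta_{i,j}^{(\timestep)}\right|\le\sum_{a=1}^{\timestep}((\numPairs-1)\eps)^a\le\frac{(\numPairs-1)\eps}{1-(\numPairs-1)\eps}$. The gap is in the last step, and it stems from a misidentification of $\alpha$. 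In the paper $\alpha$ is \emph{not} a learning rate: the update is run with step size $\eta=1$ (as in \Cref{Lemma: DeltaRule}), and $\alpha$ is the coherence of the \emph{value} vectors, $\left|\langle\vv_i,\vv_j\rangle\right|\le\alpha$ for $i\neq j$. The condition $\alpha<\frac{\numPairs-1}{\numPairs+1}$ is what makes the clean read-out $\langle\vv_i,\vk_i\mW^{(\timestep)}\rangle\ge 1-\frac{\alpha}{\numPairs+1}$ beat every competitor $\langle\vv_j,\vk_i\mW^{(\timestep)}\rangle\le\alpha+\frac{\alpha}{\numPairs+1}$ in the decoding step of \Cref{def:Decode}.

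Under your reading ($\alpha$ as step size), the final margin argument does not go through. You correctly note that a once-seen key then has diagonal read-out only $\approx\alpha$ against $O(\numPairs\eps)$ interference, but closing that margin requires a \emph{lower} bound on $\alpha$ relative to $\numPairs\eps$, whereas the theorem supplies an \emph{upper} bound; the identity $\frac{1+\alpha}{1-\alpha}<\numPairs$ is stated but never derived from any inequality in your analysis, so the balance is asserted rather than proved. Moreover, with a step size strictly below $1$ the peel is incomplete, so your claimed invariant that the diagonal coefficient of an occurring key is pinned near $1$ (and your "exact fixed point" sanity check) would need reproving. With $\eta=1$, as in the paper, a single occurrence already writes the full $\vk_\ell^{\top}\vv_\ell$ into the state, so there is no weak once-seen case at all, and the only remaining work is the value-coherence comparison that your proposal omits (you never state any assumption on the values, yet the $\arg\max$ decoding cannot succeed without one). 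Fixing the proof requires reinstating $\eta=1$, introducing the assumption $\left|\langle\vv_i,\vv_j\rangle\right|\le\alpha$, and carrying out the two-sided read-out comparison above.
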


\begin{proof}

    We define: 
    \[ \errs{i}{j}{\timestep} \] 
    to be the coefficient associated with $\vk_i^{\top} \vv_{j}$ in $\mW^{(\timestep)}$. Specifically, let
\begin{equation}\label{Def: CacheMatrix}
        \cacheMatrixTime{\timestep} = \sum_{i=1}^{\numPairs} \sum_{j=1}^{\numPairs} \errs{i}{j}{\timestep} \vk_{i}^{\top} \vv_{j}
\end{equation}

    We will prove by induction that:
    \begin{equation}\label{eq:errs}\errs{i}{j}{t}=\mathbb{1}_{(\vk_{i},\vv_{j}) \text{ has occurred}} + \Delta^{(\timestep)}_{i,j}
    \end{equation}
    where, 
    \begin{equation}\label{eq:delta}\left|\Delta_{i,j}^{(\timestep)}\right| \le \sum_{a=1}^{\timestep}((\numPairs-1)\epsilon)^{a}.
    \end{equation}
    \textbf{Base Case:} Initially, the state is set to all zeros. From this, naturally follows that  all of the $\errs{i}{j}{\timestep}$ are zero. I.e. \Cref{eq:errs}:
    \[\Delta_{i,j}=0.\]
        
    \textbf{Inductive hypothesis:} Assume that all for some timestep $\timestep$ and $1\le i, j \le \numPairs$: 
    \[\errs{i}{j}{t}=\mathbb{1}_{(\vk_{i},\vv_{j}) \text{ has occurred}} + \Delta^{(\timestep)}_{i,j},\]
    where $\Delta_{i,j}^{(\timestep)}$ satisfies \Cref{eq:delta}.

    \textbf{Inductive Step:}
    If at timestep $\timestep+1$ we are given $(\vk_{\ell}, \vv_{\ell})$, from \Cref{Def: CacheMatrix} the update looks like:
    \begin{align*}
    \mW^{(\timestep+1)}&=\sum_{i=1}^{\numPairs} \sum_{j=1}^{\numPairs} \errs{i}{j}{\timestep+1} \vk_{i}^{\top} \vv_{j}\\ &= \sum_{i'=1}^{\numPairs} \sum_{j'=1}^{\numPairs} \errs{i'}{j'}{\timestep} \vk_{i'}^{\top} \vv_{j'} -
    \left(\sum_{i'=1}^{\numPairs} \sum_{j'=1}^{\numPairs} \errs{i'}{j'}{\timestep}\vk_{\ell}^{\top}\vk_{\ell} \vk_{i'}^{\top} \vv_{j'} \right) + \vk_{\ell}^{\top}\vv_{\ell} \\
    &= \sum_{i'=1}^{\numPairs} \sum_{j'=1}^{\numPairs} \errs{i'}{j'}{\timestep} \vk_{i'}^{\top} \vv_{j'} -
    \left(\sum_{i'=1}^{\numPairs} \sum_{j'=1}^{\numPairs} \error{\ell}{i'} \errs{i'}{j'}{\timestep} \vk_{\ell}^{\top} \vv_{j'} \right) + \vk_{\ell}^{\top}\vv_{\ell} \\ 
     &\text{change the associativity of the summations,} \\ 
    &= \sum_{i'=1}^{\numPairs} \sum_{j'=1}^{\numPairs} \errs{i'}{j'}{\timestep} \vk_{i'}^{\top} \vv_{j'} -
    \left(\sum_{j'=1}^{\numPairs} \left(\sum_{i'=1}^{\numPairs} \error{\ell}{i'} \errs{i'}{j'}{\timestep}\right) \vk_{\ell}^{\top} \vv_{j'} \right) + \vk_{\ell}^{\top}\vv_{\ell} \\ 
    &\text{here we separate the first term where $i'=\ell$ and $i'\neq\ell$,}\\
    &= \sum_{i'\neq\ell}^{\numPairs} \sum_{j'=1}^{\numPairs} \errs{i'}{j'}{\timestep} \vk_{i'}^{\top} \vv_{j'} +  \sum_{j'=1}^{\numPairs} \errs{\ell}{j'}{\timestep} \vk_{\ell}^{\top} \vv_{j'} -
    \left(\sum_{j'=1}^{\numPairs} \left(\sum_{i'=1}^{\numPairs} \error{\ell}{i'} \errs{i'}{j'}{\timestep}\right) \vk_{\ell}^{\top} \vv_{j'} \right) + \vk_{\ell}^{\top}\vv_{\ell} \\ 
    &\text{here we separate the first term where $i'=\ell$ and $i'\neq\ell$,}\\
    &= \sum_{i'\neq\ell}^{\numPairs} \sum_{j'=1}^{\numPairs} \errs{i'}{j'}{\timestep} \vk_{i'}^{\top} \vv_{j'} +  \sum_{j'=1}^{\numPairs} \errs{\ell}{j'}{\timestep} \vk_{\ell}^{\top} \vv_{j'} - 
    \left(\sum_{j'=1}^{\numPairs}  \error{\ell}{\ell} \errs{\ell}{j'}{\timestep}\vk_{\ell}^{\top} \vv_{j'} \right)-\left(\sum_{j'=1}^{\numPairs} \left(\sum_{i'\neq\ell} \error{\ell}{i'} \errs{i'}{j'}{\timestep}\right) \vk_{\ell}^{\top} \vv_{j'} \right) + \vk_{\ell}^{\top}\vv_{\ell} \\ 
    &\text{remove $\eps_{j,j}$,}\\
    &= \sum_{i'\neq\ell}^{\numPairs} \sum_{j'=1}^{\numPairs} \errs{i'}{j'}{\timestep} \vk_{i'}^{\top} \vv_{j'} +  \sum_{j'=1}^{\numPairs} \errs{\ell}{j'}{\timestep} \vk_{\ell}^{\top} \vv_{j'} - 
    \sum_{j'=1}^{\numPairs} \errs{\ell}{j'}{\timestep}\vk_{\ell}^{\top} \vv_{j'} -\left(\sum_{j'=1}^{\numPairs} \left(\sum_{i'\neq\ell} \error{\ell}{i'} \errs{i'}{j'}{\timestep}\right) \vk_{\ell}^{\top} \vv_{j'} \right) + \vk_{\ell}^{\top}\vv_{\ell} \\ 
    &\text{cancel terms,}\\
    &= \sum_{i'\neq\ell}^{\numPairs} \sum_{j'=1}^{\numPairs} \errs{i'}{j'}{\timestep} \vk_{i'}^{\top} \vv_{j'} -\left(\sum_{j'=1}^{\numPairs} \left(\sum_{i'\neq\ell} \error{\ell}{i'} \errs{i'}{j'}{\timestep}\right) \vk_{\ell}^{\top} \vv_{j'} \right) + \vk_{\ell}^{\top}\vv_{\ell} .\\ 
    \end{align*}

    Note with this we can see that: 
    \begin{align*} 
    \errs{i}{j}{\timestep+1} 
    &= \begin{cases}
         \errs{i}{j}{\timestep} &\text{ if } \ell \neq i \\
         -\displaystyle\sum_{i'\neq \ell} \error{\ell}{i'}\errs{i'}{j}{\timestep} + \mathbb{1}_{j=\ell}  &\text{ if } \ell = i 
    \end{cases}.
    \end{align*}
Thus, if $i\neq \ell$, we have:

\[\errs{i}{j}{t+1}=\errs{i}{j}{t},\]
for $i\neq\ell$.
The inductive statement holds for these pairs. 
Now let's consider $\errs{\ell}{j}{t+1}$. If $\ell = j $ then:
\[\errs{\ell}{\ell}{t+1}= 1+ \Delta_{\ell,\ell}^{(\timestep+1)}= \sum_{i'\neq \ell} \error{\ell}{i'}\errs{i'}{j}{\timestep}+1\]

and note that by the triangle inequality and \Cref{def:JL}:
\begin{align*}\left|\Delta_{\ell,\ell}^{(\timestep+1)}\right| &\le \epsilon \sum_{i'\neq \ell}\left|\errs{i'}{\ell}{\timestep}\right| \\ 
&\text{by the inductive hypothesis,} \\ 
&\le \epsilon\sum_{i'\neq \ell}(1+\sum_{a=1}^{\timestep}((\numPairs-1)\epsilon)^{a})\\
&= ((\numPairs-1)\epsilon)(1+\sum_{a=1}^{\timestep}((\numPairs-1)\epsilon)^{a}) \\
&= (\sum_{a=1}^{\timestep+1}((\numPairs-1)\epsilon)^{a}),  
\end{align*}
as desired.

Then for $j\neq \ell$, we have:
\begin{align*} \left|\Delta^{(\timestep+1)}_{j,\ell}\right| &=\left|\errs{i}{j}{t+1}\right|\\ &= \left|\sum_{i'\neq\ell} \error{\ell}{i'}\errs{i'}{j}{\timestep}\right|
\end{align*}
The bounding of $\Delta_{\ell, j}^{(\timestep)}$ is similar to the $\ell=j$ case. 

With this we have completed the inductive proof on error terms. 

If the we set: 

\[\epsilon < \frac{1}{m^{2}(\numPairs-1)}, \]

we get the following bound:

\begin{align} \Delta_{i,j}^{(\timestep)}&\le \sum_{a=1}^{\timestep}((\numPairs-1)\epsilon)^{a}  \\ &\le \frac{(\numPairs-1)\epsilon}{1-(\numPairs-1)\epsilon} \\
&<\frac{1}{\numPairs^{2}-1} \label{eq:delta-bound}
\end{align}


Before the next steps, we must bound:
\begin{equation} \label{eq:value bound}
    \left|\langle \vv_{i}, \vv_{j} \rangle \right| \le \alpha
\end{equation}

For a query with $\vk_{i}$, assuming we have seen $\vk_{i}$ before, we get:
\begin{align*}
    \vk_{i}\cdot\mW^{(\timestep)} &=\vv_{i} + \sum_{j'\neq i} \Delta_{i,j'}^{(\timestep)} \vv_{j'}
\end{align*}

Now for the decoding step where for an arbitrary $\vv_{j}$ we get:
\begin{align*}
    \langle \vv_{j}, \vk_{i}\cdot\mW^{(\timestep)} \rangle &=\langle \vv_{j}, \vv_{i}\rangle + \langle \vv_{j}, \sum_{j'\neq i} \Delta_{i,j'} \vv_{j'} \rangle
\end{align*}

For the case where $i=j$ it is the case that:
\begin{align*}\langle \vv_{i}, \vk_{i}\cdot\mW^{(\timestep)} \rangle &=1 + \langle \vv_{i}, \sum_{j'\neq i} \Delta_{i,j'} \vv_{j'} \rangle \\ 
&\ge 1 - \frac{1}{m+1} \alpha.
\end{align*}
This follows from \Cref{eq:delta-bound} and \Cref{eq:value bound}.

For the case where $i\neq j$ it is the case that:
\begin{align*}\langle \vv_{j}, \vk_{i}\cdot\mW^{(\timestep)} \rangle &= \langle \vv_i, \vv_j \rangle + \langle \vv_{j}, \sum_{j'\neq i} \Delta_{i,j'} \vv_{j'} \rangle \\ 
&\le \alpha + \frac{1}{m+1} \alpha
\end{align*}
This follows from \Cref{eq:delta-bound} and \Cref{eq:value bound}.

As a result, we will always pick the correct value when $\alpha < \frac{m-1}{m+1}$. 
\end{proof}



\end{document}